\newcommand{\tr}[1]{\textcolor{red}{\textbf{#1}}}
\newcommand{\tb}[1]{\textcolor{blue}{\textbf{#1}}}
\newcommand{\dt}[2]{\tiny{#1}$\to$\tiny{#2}}
\newtheorem{theorem}{Theorem}
\newtheorem{lemma}{lemma}
\setlist[itemize]{leftmargin=*}
\setlist[description]{leftmargin=*}
\date{\vspace{-5ex}}
\begin{document}

\title{Subspace Alignment For Domain Adaptation}




\author[1]{Basura Fernando}
\author[2]{Amaury Habrard}
\author[2]{Marc Sebban}
\author[1]{Tinne Tuytelaars}
\affil[1]{KU Leuven, ESAT-PSI, iMinds, Belgium}
\affil[2]{Universit\'{e} de Lyon, Universit\'{e} de St-Etienne F-42000, \\ UMR CNRS 5516, Laboratoire Hubert-Curien, France}

%
%
%
%
 \maketitle

\begin{abstract}
In this paper, we introduce a new domain adaptation (DA) algorithm where the 
source and target domains are represented by subspaces spanned by 
eigenvectors. Our method seeks a domain invariant feature space by learning a 
mapping function which aligns the source subspace with the target one. We show 
that the solution of the corresponding optimization problem can be obtained in a 
simple closed form, leading to an extremely fast algorithm. We present two 
approaches to determine the only hyper-parameter in our method corresponding to the 
size of the subspaces. In the first approach we tune the size of subspaces 
using a theoretical bound on the stability of the obtained result. In 
the second approach, we use maximum likelihood estimation to determine the subspace 
size, which is particularly useful for high dimensional data. Apart from PCA, we propose a subspace creation method that outperform partial least 
squares (PLS) and linear discriminant analysis (LDA) in domain adaptation. We 
test our method on various datasets and show that, despite its intrinsic 
simplicity, it outperforms state of the art DA methods.

\end{abstract}


\section{Introduction}
In classification, it is typically assumed that the test data 
comes from the same distribution as that of the labeled training data. However, 
many real 
world applications, especially in computer vision, challenge this assumption 
(see,  e.g., the study on dataset bias in \cite{A.Torralba2011}). In this 
context, the learner must take special care during the learning process to 
infer models that adapt well to the test data they are deployed on. For 
example, 
images collected from a DSLR camera are different from those taken with a web 
camera. A classifier that is trained on the former would likely fail to 
classify the latter correctly if applied without adaptation. Likewise, in face 
recognition the objective is to identify a person using available training 
images. However, the test images may arise from very different capturing 
conditions than the ones in the training set. In image annotation, 
the training images (such as ImageNet) could be very 
different from the images that we need to annotate (for example key frames 
extracted from an old video). These are some examples where 
training and test data are drawn from different distributions. 

\begin{figure*}[t]
 \centering
 \includegraphics[width=100mm]{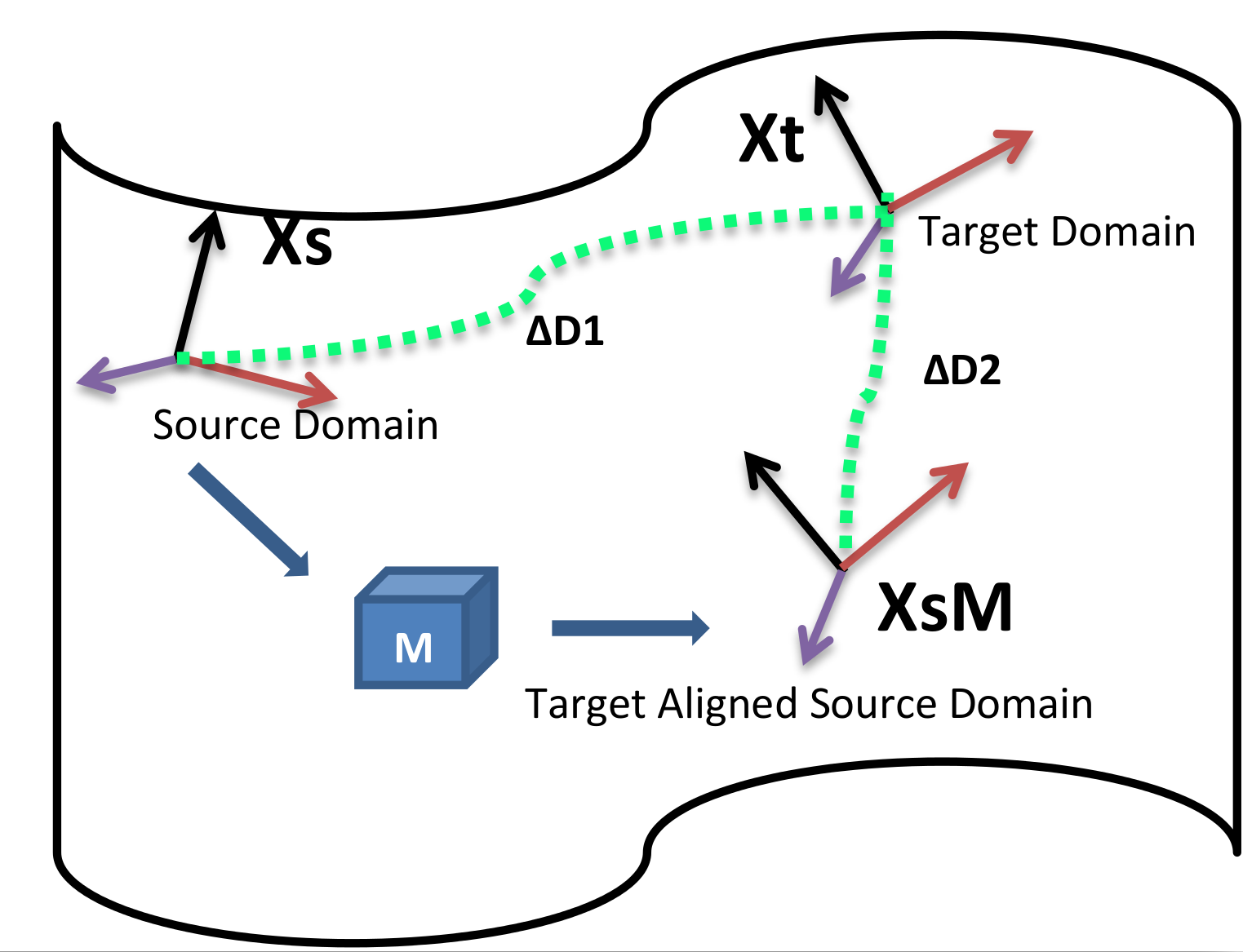} 
 \caption{Illustration of our subspace alignment method. The source domain is 
represented by the source subspace $Xs$ and the target domain by target 
subspace $Xt$. Then we align/transform the source subspace such that the 
aligned source subspace $ Xa=XsM $ is as close as possible to the target 
subspace in the Bregman divergence perspective (i.e. $ \Delta D_1 > \Delta D_2 
$ ). Then we project source data to the target aligned source subspace 
and the target data to the target subspace.}
 \label{fig:illustrated}
\end{figure*}

We refer to these different but related joint distributions as 
\textit{domains}. Formally, if we denote $P(\chi_d)$ as the data distribution 
and $P(\vartheta_d)$ the label distribution of domain $d$, then the source 
domain $S$ and the target domain $T$ have different joint distributions 
$P(\chi_S,\vartheta_S) \neq  P(\chi_T,\vartheta_T)$. In order to build robust 
classifiers, it is thus necessary to take into account the shift between these 
two distributions. Methods that are designed to overcome this shift in domains 
are known as \textit{domain adaptation} (DA) methods. DA 
typically aims at making use of information coming from both source and target 
domains during the learning process to adapt automatically. One usually 
differentiates between two scenarios: (1) the 
\textit{unsupervised} setting where the training consists of labeled source 
data and unlabeled target examples (see \cite{Margolis2011} for a survey); and 
(2) the \textit{semi-supervised} case where a large number of labels are 
available for the source domain and only a few labels are provided for the 
target domain. In this paper, we focus on the most difficult, unsupervised 
scenario.

As illustrated by recent results \cite{Gong2012,Gopalan2011}, \textit{subspace 
based domain adaptation} seems to be a promising approach to tackle 
unsupervised 
visual DA problems. In \cite{Gopalan2011}, Gopalan et al. generate intermediate 
representations in the form of subspaces along the geodesic path connecting the 
source subspace and the target subspace on the Grassmann manifold. Then, the 
source data are projected onto these subspaces and a classifier is learned. In 
\cite{Gong2012}, Gong et al. propose a geodesic flow kernel which models
incremental changes between the source and target domains. In both papers, a 
set 
of intermediate subspaces are used to model the shift between the two 
domains.

In this paper, we also make use of subspaces, one for each domain. We construct 
a subspace of size $d$, e.g. composed of the d most important eigenvectors 
induced 
by principle component analysis (PCA). However, we do not construct a set of 
intermediate subspaces. Following the theoretical recommendations of Ben-David 
et al. \cite{Ben-David2007}, we suggest to directly reduce the discrepancy 
between the two domains by moving the source and target subspaces closer. This 
is achieved by optimizing a mapping function that transforms the source 
subspace 
into the target one. Based on this simple idea, we design a new DA approach 
called \textit{subspace alignment}. The idea behind our method is illustrated 
in Figure \ref{fig:illustrated}. The source domain is represented by the source 
subspace $Xs$ and the target domain by target subspace $Xt$. Then we 
align/transform the source subspace using matrix $M$ such that the aligned 
source subspace $ Xa=M \cdot Xs $ is as close as possible to the target 
subspace in the Bregman divergence perspective. Then we project source data to 
the target aligned source subspace ($Xa$) and 
the target data to the target subspace and learn a classifier on $Xa$ subspace. 
We use this classifier to classify data in the target subspace. The 
advantage of our method is two-fold: (1) by adapting the bases of the 
subspaces, our approach is {\it global} as it manipulates the global 
co-variance matrices. This allows us to induce robust classifiers not 
subject to local perturbations (in contrast to metric learning-based 
domain adaptation approaches that need to consider pairwise or 
triplet-based constraints for inducing the metric) and (2) by 
aligning the source and target subspaces, our method is intrinsically 
regularized: we do not need to tune regularization parameters in the objective 
as imposed by a lot of optimization-based DA methods.

Our subspace alignment is achieved by optimizing a mapping function which takes 
the form of a transformation matrix $M$. We show that the optimal solution 
corresponds in fact to the covariance matrix between the source and target 
eigenvectors. From this transformation matrix, we derive a similarity function 
$Sim(\mathbf{y_S},\mathbf{y_T})$ to compare a source data $\mathbf{y_S}$ with a 
target example $\mathbf{y_T}$. Thanks to a consistency theorem, we prove that 
$Sim(\mathbf{y_S},\mathbf{y_T})$, which captures the idiosyncrasies of the 
training data, converges uniformly to its expected value. We show that we can  
make 
use of this theoretical result to tune the hyper-parameter $d$ (the 
dimensionality of the subspaces). This tends to make our method parameter-free. 
The similarity function $Sim(\mathbf{y_S},\mathbf{y_T})$ can be used directly 
in 
a nearest neighbour classifier. Alternatively, we can also learn a global 
classifier such as a support vector machine on the source data after mapping 
them onto the target aligned source subspace.

As suggested by Ben David et al.~\cite{Ben-David2007}, a reduction of the 
divergence between the two domains is required to adapt well. In other words, 
the ability of a DA algorithm to actually reduce that discrepancy is a good 
indication of its performance. A usual way to estimate the divergence consists 
in learning a linear classifier $h$ to discriminate between source and target 
instances, respectively pseudo-labeled with 0 and 1. In this context, the 
higher the error of $h$, the smaller the divergence. While such a strategy 
gives 
us some insight about the ability for a {\it global} learning algorithm (e.g. 
SVM) to be efficient on both domains, it does not seem to be suited to deal 
with {\it local} classifiers, such as the $k$-nearest neighbors. To overcome 
this limitation, we introduce a new empirical divergence specifically dedicated 
to local classifiers. We show through our experimental results that our DA 
method allows us to drastically reduce both empirical divergences.

This paper is an extended version of our previous work \cite{Fernando2013}. 
We address a few limitations of the previous subspace alignment (SA) based 
DA method. First, the work in \cite{Fernando2013} does not use source label
 information during the subspace creation. Methods such as partial least 
squares (PLS) or linear discriminant analysis (LDA) seem relevant for this
 task. However, applying different subspace creation methods such as PLS/LDA 
only for the source domain and PCA for the target domain may cause additional 
discrepancies between the source and the target domains. Moreover, LDA 
has a maximum dimensionality equal to the number of classes. To overcome 
these limitations, we use an existing metric learning algorithm 
(ITML~\cite{Davis2007}) to create subspaces in a supervised manner, 
then use PCA on both the source and the target domain. Even though 
ITML has been used before for domain adaptation (e.g. \cite{Saenko2010}), 
to the best of our knowledge, it has never been used to create linear 
subspaces in conjunction with PCA to improve subspace-based domain 
adaptation methods. We also introduce a novel large margin subspace 
alignment (LMSA) method where the objective is to learn the domain 
adaptive matrix while exploiting the source label information to 
obtain a discriminative solution. We experimentally show that both 
these methods are useful in practice.

Secondly, the cross-validation procedure presented in \cite{Fernando2013} to 
estimate the 
subspace dimensionality could be computationally expensive for high dimensional 
data as discussed in recent reviews \cite{patelvisual}. We propose a fast and 
well founded approach to estimate the subspace dimensionality and reduce 
computational complexity of our SA method. We call this new extension of SA 
method as SA-MLE. We show promising results using high dimensional data such as 
Fisher vectors using SA-MLE method. Thirdly, we present a mutual information 
based
perspective of our SA method. We show that after the adaptation, with SA method 
we can
increase the mutual information between the source and the target domains.
We also include further analysis and comparisons in the experimental section. 
We provide a detailed analysis of our DA method over several features such 
as bag-of-words, Fisher vectors~\cite{Perronnin2010} and 
DECAF~\cite{DonahueJVHZTD13} 
features. We also analyze how subspace-based DA methods perform over 
different dictionary sizes and feature normalization methods such as 
z-normalization.

The rest of the paper is organized as follows. We present the related work in 
section~\ref{sec:related}. Section~\ref{sec:asmmain} is devoted to the 
presentation of our domain adaptation method and the consistency theorem on the 
similarity measure deduced from the learned mapping function. We also present 
our supervised PCA method and the SA-MLE method 
in section~\ref{sec:asmmain}. In section~\ref{sec:experiments}, a comparative 
study is performed on various datasets. We conclude in 
section~\ref{sec:conclusion}.

\section{Related work}
\label{sec:related}

DA has been widely studied in the literature and is of great importance in many 
areas such as natural language processing \cite{Blitzer2006} and  computer 
vision \cite{A.Torralba2011}. In this paper, we focus on the unsupervised 
domain 
adaptation setting that is well suited to vision problems since it does not 
require any labeling information from the target domain. This setting makes
the problem very challenging. An important issue is to find out the 
relationship between the two domains. A classical trend is to assume the 
existence of a domain invariant feature space and  the objective of a large 
range of DA work is to approximate this space \cite{Margolis2011}. 

A classical strategy related to our work consists of learning \textit{a new
domain-invariant feature representation} by looking for a new projection space.
PCA based DA methods have then been naturally 
investigated \cite{Chen2009,Pan2008,Pan2009,Baktashmotlagh2013} in order to 
find 
a common latent space where the difference between the marginal distributions 
of 
the two 
domains is minimized with respect to the Maximum Mean Discrepancy (MMD) 
divergence. Very recently, Shao et al. \cite{Shao2014} have presented a 
very similar approach to ours. In this work Shao et al. present a low-rank 
transfer subspace learning technique that exploits the locality aware 
reconstruction in a similar way to manifold learning.   

Other strategies have been explored as well such as using metric 
learning approaches ~\cite{Kulis2011,Saenko2010} or canonical correlation 
analysis methods over different views of the data to find a coupled 
source-target subspace~\cite{Blitzer2011} where one assumes the existence of a 
performing linear classifier on the two domains. 

In the structural correspondence learning method of \cite{Blitzer2006}, Blitzer 
et al. propose to create a new feature space by identifying correspondences
among features from different domains by modeling their correlations with pivot 
features. Then, they concatenate source and target data using this feature 
representation and apply PCA to find a relevant common projection.
In~\cite{Chang2012}, Chang transforms the source data  into an intermediate 
representation such that each transformed source sample can be linearly 
reconstructed by the target samples. This is however a local approach. Local 
methods may 
fail to capture the global distribution information of the source,target 
domains. Moreover 
it is sensitive to noise and outliers of the source domain that have no 
correspondence in the target one. 
 
Our method is also related to manifold alignment whose main objective is to 
align two datasets from two different manifolds such that they can be projected 
to a common subspace \cite{Wang2009,Wang2011,Zhai2010}. Most of 
these methods~\cite{Wang2011,Zhai2010} need correspondences from the manifolds 
and all of them exploit the local statistical structure of the data unlike our 
method that captures the global distribution structure (i.e. the structure of 
the co-variances). At the same time methods such as CCA and manifold alignment 
methods can the input datasets to be from different manifolds.

Recently, subspace based DA has demonstrated good performance in visual 
DA~\cite{Gong2012,Gopalan2011}. These methods share the same principle: first 
they compute a domain specific d-dimen\-sional subspace for the  source data 
and 
another one for the target data, independently assessed by PCA. Then, they 
project source and target data into intermediate subspaces (explicitly or 
implicitly) along the shortest 
geodesic path connecting  the two d-dimensional subspaces on the Grassmann  
manifold. They actually model the distribution shift by looking for the best 
intermediate subspaces. These approaches are the closest to ours but, as 
mentioned in the introduction, it is more appropriate to align the two 
subspaces directly, instead of computing a large number of intermediate 
subspaces which 
potentially involves a costly tuning procedure. The effectiveness of our idea 
is supported by our experimental results. 

As a summary, our approach has the following differences with existing methods:

We exploit the \textit{global} co-variance statistical structure of the 
two domains during the adaptation process in contrast to the manifold 
alignment methods that use local statistical structure of the data
~\cite{Wang2009,Wang2011,Zhai2010}. We project the source data onto the 
source subspace and the target data onto the target subspace in contrast to 
methods that project source data to the target subspace or target data to the 
source subspace such as~\cite{Blitzer2011}. Moreover, we do not project data to 
a large number of subspaces explicitly or implicitly as 
in~\cite{Gong2012,Gopalan2011}. Our method 
is unsupervised and does not require any target label information like  
constraints on cross-domain data~\cite{Kulis2011,Saenko2010} or correspondences 
from across datasets~\cite{Wang2011,Zhai2010}. We do not apply PCA on 
cross-domain data like in~\cite{Chen2009,Pan2008,Pan2009} as this can help only 
if there exist shared features in both domains. In contrast, we make use of the 
correlated features in both domains. Some of these features can be specific to 
one domain yet correlated to some other shared features in both domains 
allowing us to use both shared and domain specific features.


\section{DA based on unsupervised subspace alignment}
\label{sec:asmmain}
In this section, we introduce our new subspace based DA method. We assume that 
we have a set $S$ of labeled source data (resp. a set $T$ of unlabeled 
target data) both lying in a given $D$-dimensional space and drawn i.i.d. 
according to a fixed but unknown source distribution $P(\chi_S,\vartheta_S)$ 
(resp. target distribution $P(\chi_T,\vartheta_T)$) where $P(\chi_S,\vartheta_S) 
\neq 
P(\chi_T,\vartheta_T)$. We assume that the source labeling function is more or 
less similar to the target labeling function. We denote the transpose operation 
by $'$. We denote vectors by lowercase bold fonts such as $\mathbf{y_i}$ and 
the class label by notation $L_i$. Matrices are represented by uppercase 
letters such as $X$.
 
In section~\ref{sec:notations}, we explain how to generate the source and 
target subspaces of size $d$. Then, we present our DA method  in 
section~\ref{sec:asmmethod} which consists in learning a transformation matrix 
$M$ that maps the source subspace to the target one. In section
\ref{sec:dimmain}, we present two methods to find the subspace dimensionality 
which is the only parameter in our method. We present a metric learning-based  
source subspace creation method in section \ref{sec:itmlpca} which uses labels 
of 
source data and our novel large margin subspace alignment in 
section~\ref{sec:LMSA}. In section~\ref{sec:divergence} we present a new domain 
divergence 
measure suitable for local classifiers such as the nearest neighbour 
classifier. 
Finally, in section \ref{sec:mutual} we give a mutual information based 
perspective to our method. 

\subsection{Subspace generation}
\label{sec:notations}
Even though both the source and target data lie in the same $D$-dimensional 
space, they have been drawn according to different distributions. Consequently, 
rather than working on the original data themselves, we suggest to 
handle more robust representations of the source and target domains and to 
learn the shift between these two domains. First, we transform every source 
and target data to a $D$-dimensional z-normalized vector (i.e. of 
zero mean and unit standard deviation). Note that z-normalization is an 
important step in most of the subspace-based DA methods such as GFK 
\cite{Gong2012} and GFS \cite{Gopalan2011}. Then, using PCA, we select for each 
domain the $d$ eigenvectors corresponding to the $d$ largest eigenvalues. These 
eigenvectors are used as bases of the source and target subspaces, respectively 
denoted by  $X_S$ and  $X_T$ ($ X_S , X_T \in \mathbb{R}^{D \times d}$). Note 
that $X_S'$ and  $X_T'$ are orthonormal (thus, $X_S' X_S = I_d$ and $X_T' X_T 
= I_d$ where $I_d$ is the identity matrix of size $d$). In the following, $X_S$ 
and  $X_T$ are used to learn the shift between the two domains. Sometimes, we 
refer $X_S$ and $X_T$ as subspaces, where we actually refer to the basis 
vectors of the subspace. 

\subsection{Domain adaptation with subspace alignment}
\label{sec:asmmethod}

As already presented in section~\ref{sec:related}, two main strategies are used 
in subspace based DA methods. The first one consists in projecting both source 
and target data to a common shared subspace. However, since this only exploits 
shared features in both domains, it is not always optimal. The second one aims 
to build a (potentially large) set of intermediate representations. Beyond the 
fact that such a strategy can be costly, projecting the data to intermediate 
common shared subspaces may lead to data explosion.

In our method, we suggest to project each source ($\mathbf{y_S}$) and 
target ($\mathbf{y_T}$) data (where $\mathbf{y_S}, \mathbf{y_T} \in 
\mathbb{R}^{1 \times D}$) to its respective subspace $X_S$ and $X_T$ by the 
operations $\mathbf{y_S} X_S$ and $\mathbf{y_T} X_T$, respectively. Then, we 
learn a linear transformation that maps the source subspace to the target one. 
This step allows us to directly compare source and 
target samples in their respective subspaces without unnecessary data 
projections. To achieve this task, we use a \textit{subspace alignment} 
approach. We align basis vectors by using a transformation matrix $M$ from 
$X_S$ 
to $X_T (M \in \mathbb{R} ^{d \times d})$. $M$ is learned by minimizing the 
following Bregman matrix divergence: 

\begin{equation}
F(M) = || X_S M - X_T ||_F^{2}
\label{eq:objective1}
\end{equation}

\begin{equation}
M^* = argmin_M ( F(M) )
\label{eq:objective}
\end{equation}

where $||.||_F^{2}$ is the Frobenius norm. Since $X_S$ and $X_T$ are generated 
from the first $d$ eigenvectors, it turns out that they tend to be 
intrinsically regularized\footnote{We experimented with several regularization 
methods on the transformation matrix $M$ such as 2-norm, trace norm, and 
Frobenius norm regularization. None of these regularization strategies 
improved over using no regularization.}. Therefore, we suggest not to add a 
regularization term in the Eq.~\ref{eq:objective1}. It is thus possible to 
obtain a simple solution of Eq.~\ref{eq:objective} in closed form. Because 
the Frobenius norm is invariant to orthonormal operations, we can re-write the
objective function in Eq.~\ref{eq:objective1} as follows:

\begin{eqnarray}
 M^* & = & argmin_M || X_S' X_S M - X_S' X_T ||_F^{2}\\ \nonumber
 & = &  argmin_M || M - X_S' X_T ||_F^{2}.
\label{eq:objectivesol}
\end{eqnarray}

From this result, we can conclude that the optimal $M^*$ is obtained as $M^* = 
X_S' X_T$. This implies that the new coordinate system is equivalent to $X_a = 
X_S X_S' X_T$. We call $X_a$ the \textit{target aligned source coordinate 
system}. It is worth noting that if the source and target domains are the same, 
then $X_S = X_T$ and $M^*$ is the identity matrix.

Matrix $M^*$ transforms the source subspace coordinate system into the target 
subspace coordinate system by aligning the source basis vectors with the target 
ones. 
is 

In order to compare a source data $\mathbf{y_S}$ with a target data 
$\mathbf{y_T}$, one needs a similarity function 
$Sim(\mathbf{y_S},\mathbf{y_T})$. 
Projecting $\mathbf{y_S}$ and $\mathbf{y_T}$ 
in their respective subspace $X_S$ and $X_T$ and applying the optimal 
transformation matrix $M^*$, we can define $Sim(\mathbf{y_S},\mathbf{y_T})$ as 
follows:

\begin{eqnarray}
\small
Sim(\mathbf{y_S},\mathbf{y_T}) & = & (\mathbf{y_S} X_SM^* )(\mathbf{y_T} X_T)' 
= 
\mathbf{y_S} X_S M^* X_T' \mathbf{y_T}'\nonumber\\
& =  & \mathbf{y_S} A  \mathbf{y_T}',
\label{eq:asmsym}
\end{eqnarray}

where $A= X_S X_S' X_T X_T'$. Note that Eq.~\ref{eq:asmsym} looks like a 
generalized dot product (even though $A$ is not positive semi-definite) where 
$A$ encodes the relative contributions of the different components of the 
vectors in their original space.

We use the matrix $A$ to construct the kernel matrices via 
$Sim(\mathbf{y_S},\mathbf{y_T})$ and perform SVM-based classification. 
To use with nearest neighbour classifier, we project the source data via $X_a$ 
into the 
target aligned source subspace and the target data into the target subspace 
(using $X_T$), and perform the classification in this $d$-dimensional space. 
The 
pseudo-code of this algorithm is presented in Algorithm~\ref{algo:da}.

\begin{algorithm}
\SetAlgoLined
\KwData{Source data $S$, Target data $T$, Source labels $L_S$, Subspace 
dimension $d$}
\KwResult{ Predicted target labels $L_T$ }
$X_S \leftarrow PCA(S,d)$  \;
$X_T \leftarrow PCA(T,d)$  \;
$X_a \leftarrow X_S X_S' X_T$  \;
$S_a = S  X_a$ \;
$T_T = T X_T$ \;
$L_T \leftarrow NN-Classifier(S_a,T_T,L_S)$ \;
\caption{Subspace alignment DA algorithm in the lower $d$-dimensional space}
\label{algo:da}
\end{algorithm}

\begin{figure*}[t]
 \centering
 \includegraphics[width=140mm]{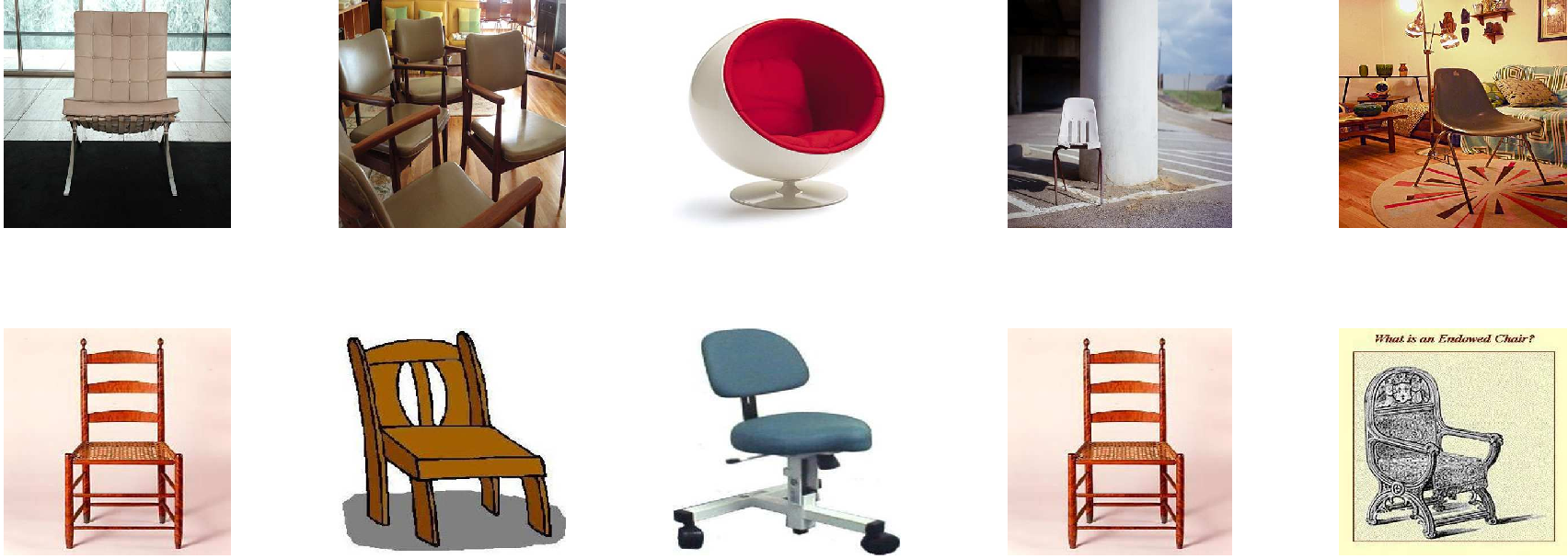} 
 \caption{Classifying ImageNet images using Caltech-256 images as the source 
domain. In the first row, we show an ImageNet query image. In the second row, 
the nearest neighbour image from our method is shown.}
 \label{fig:visual}
\end{figure*}

\begin{figure*}[t]
 \centering 
 \includegraphics[width=140mm]{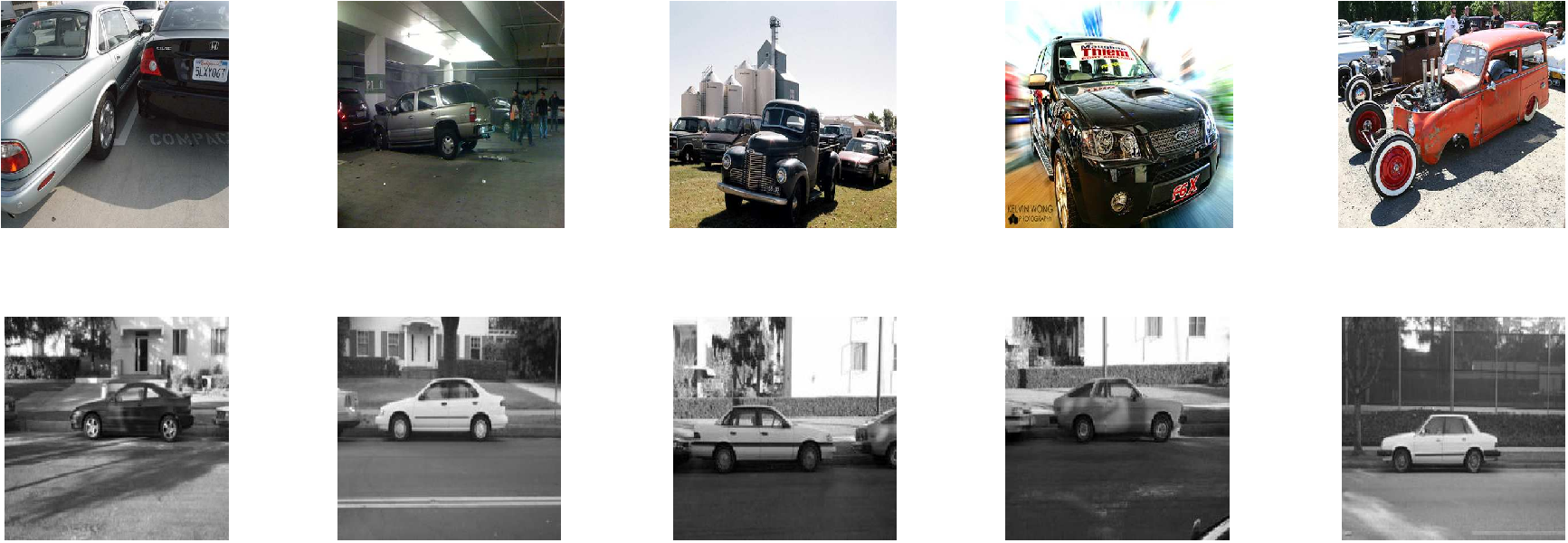} 
 \caption{Classifying ImageNet images using Caltech-256 images as the source 
domain. In the first row, we show an ImageNet query image. In the second row, 
the nearest neighbour image from our method is shown.}
 \label{fig:visual1}
\end{figure*}

From now on, we will simply use $M$ to refer to $M^*$.

\subsection{Subspace dimensionality estimation}
\label{sec:dimmain}
Now we have explained the \textit{subspace alignment} domain adaptation 
algorithm in section \ref{sec:asmmethod}, in the following two subsections we 
present two methods to find the size of the subspace dimensionality which is 
the unique hyper-parameter of our method. First, in section \ref{sec:subdim} we 
present a theoretical result on the stability of the solution and then use 
this result to tune the subspace dimensionality $d$. In the second method, we 
use maximum likelihood estimation to find the intrinsic subspace 
dimensionalities of the source domain ($d_s$) and the target domain ($d_t$). 
Then we  use these dimensionalities as before in the SA method. This slightly 
different version of SA method is called \textit{SA-MLE}. \textit{SA-MLE} does 
not necessarily need to have the same dimensionality for both the source 
subspace $X_S$ and the target subspace $X_T$. We present SA-MLE method in 
subsection \ref{sec:esa}.    

\subsubsection{SA: Subspace dimensionality estimation using consistency theorem 
on $Sim(\mathbf{y_S},\mathbf{y_T})$}
\label{sec:subdim}

The unique hyper-parameter of our algorithm is the number $d$ of eigenvectors. 
In this section, inspired from concentration inequalities on 
eigenvectors~\cite{Zwald2005}, we derive an upper bound on the similarity 
function $Sim(\mathbf{y_S},\mathbf{y_T})$. Then, we show that we can use this 
theoretical result to tune $d$.

Let $\tilde{D}_n$ be the covariance matrix of a sample $D$ of size $n$ drawn 
i.i.d. from a given distribution and $\tilde{D}$ its expected value over that 
distribution. 

\begin{theorem}\label{Zwald-blanchard05}~\cite{Zwald2005}
Let $B$ be s.t. for any $\mathbf{x}, \|\mathbf{x}\|\leq B$, let 
$X^d_{\tilde{D}}$ and 
$X^{d}_{\tilde{D}_n}$ be the orthogonal projectors of the subspaces spanned by 
the first 
d eigenvectors of $\tilde{D}$ and $\tilde{D}_n$. Let  
$\lambda_1>\lambda_2>...>\lambda_d>\lambda_{d+1}\geq 0$ be the
first $d+1$ eigenvalues of $\tilde{D}$, then for any
$n\geq \left(\frac{4B}{(\lambda_d-\lambda_{d+1})}\left(1+\sqrt{\frac{\ln( 
1/\delta)}{2}}\right)\right)^2 $ with
probability at least $1-\delta$ we have:
$$
\|X^{d}_{\tilde{D}}-X^{d}_{\tilde{D}_n}\|\leq
\frac{4B}{\sqrt{n}(\lambda_d-\lambda_{d+1})}\left(1+\sqrt{\frac{\ln( 
1/\delta)}{2}}\right).
$$
\end{theorem}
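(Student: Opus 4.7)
The plan is to split the argument into two largely independent pieces: a concentration inequality for the empirical covariance operator around its population version, and a deterministic perturbation bound for the projector onto its top $d$ eigenvectors. These two pieces are joined through the eigengap $\lambda_d - \lambda_{d+1}$, which governs how stable the leading spectral projector is under a perturbation of the covariance.

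For the first step, I would write $\tilde{D}_n = \frac{1}{n}\sum_{i=1}^n \mathbf{x}_i \mathbf{x}_i'$ as an average of i.i.d.\ rank-one operators whose norm is uniformly controlled in terms of $B$ via $\|\mathbf{x}\| \leq B$. A standard bounded-differences (McDiarmid) argument applied to the random variable $f(\mathbf{x}_1,\dots,\mathbf{x}_n) = \|\tilde{D}_n - \tilde{D}\|$, exploiting that replacing a single sample shifts the empirical mean by $O(1/n)$, yields with probability at least $1 - \delta$ a concentration bound of the form
$$\|\tilde{D}_n - \tilde{D}\| \;\leq\; \frac{cB}{\sqrt{n}}\Bigl(1 + \sqrt{\tfrac{\ln(1/\delta)}{2}}\Bigr),$$
with the constants matching \cite{Zwald2005} under their normalisation (whether $B$ enters linearly or quadratically depends on whether the hypothesis is read as a bound on $\|\mathbf{x}\|$ or on $\|\mathbf{x}\mathbf{x}'\|$; both routes are classical).

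For the second step, I would invoke the Riesz--Dunford contour integral representation of the spectral projector,
$$X^d_A \;=\; \frac{1}{2\pi i} \oint_{\Gamma} (zI - A)^{-1}\, dz,$$
with $\Gamma$ a closed contour in the complex plane that encloses the top $d$ eigenvalues of $\tilde{D}$ but excludes $\lambda_{d+1}$. Taking $\Gamma$ to be a circle whose distance to the spectrum of $\tilde{D}$ is at least $(\lambda_d - \lambda_{d+1})/2$ bounds the resolvent uniformly on $\Gamma$ by $2/(\lambda_d - \lambda_{d+1})$. Applying the same contour to both $\tilde{D}$ and $\tilde{D}_n$ and using the second resolvent identity $R_A(z) - R_{A'}(z) = R_A(z)(A' - A)R_{A'}(z)$ gives
$$\|X^d_{\tilde{D}} - X^d_{\tilde{D}_n}\| \;\leq\; \frac{|\Gamma|}{2\pi}\sup_{z\in\Gamma}\|R_{\tilde{D}}(z)\|\,\|R_{\tilde{D}_n}(z)\|\,\|\tilde{D} - \tilde{D}_n\|,$$
which simplifies to a bound of order $\|\tilde{D} - \tilde{D}_n\|/(\lambda_d - \lambda_{d+1})$.

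Combining the two pieces produces the claimed inequality, and the sample-size condition $n \geq \bigl(4B/(\lambda_d - \lambda_{d+1})\bigr)^2 \bigl(1+\sqrt{\ln(1/\delta)/2}\bigr)^2$ is exactly what forces the Step 1 estimate to drop below $(\lambda_d - \lambda_{d+1})/2$. I expect the main obstacle to be precisely this spectral-separation requirement: if $\|\tilde{D}-\tilde{D}_n\|$ ever exceeds half the eigengap, the contour $\Gamma$ may no longer isolate the top $d$ eigenvalues of $\tilde{D}_n$, the resolvent $R_{\tilde{D}_n}(z)$ becomes unbounded on $\Gamma$, and the perturbation argument breaks down. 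Once that geometric condition is in place, everything reduces to plugging the Step 1 concentration into the Step 2 resolvent bound.
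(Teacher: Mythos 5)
First, note that the paper itself offers no proof of this statement: Theorem~\ref{Zwald-blanchard05} is imported verbatim from \cite{Zwald2005} and used as a black box to derive Lemma~\ref{lema:one} and Theorem~\ref{theorem:asm}. So there is no in-paper argument to compare against; what you have reconstructed is, in architecture, exactly the proof given in the cited source --- a concentration bound on $\|\tilde{D}_n-\tilde{D}\|$ obtained by a bounded-differences argument on the Hilbert--Schmidt norm of an average of i.i.d.\ rank-one operators, composed with a deterministic perturbation bound for the rank-$d$ spectral projector controlled by the eigengap. Your observation about whether $B$ should enter linearly or quadratically is also well taken: as stated here the hypothesis $\|\mathbf{x}\|\leq B$ makes $\|\mathbf{x}\mathbf{x}'\|_{HS}\leq B^2$, so the constant in the theorem is only consistent with the original if $B$ is read as a bound on $k(\mathbf{x},\mathbf{x})=\|\mathbf{x}\|^2$, which is how \cite{Zwald2005} states it.

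The one step that does not go through as written is the second one. The crude estimate $\frac{|\Gamma|}{2\pi}\sup_{z\in\Gamma}\|R_{\tilde{D}}(z)\|\,\|R_{\tilde{D}_n}(z)\|\,\|\tilde{D}-\tilde{D}_n\|$ with a circle enclosing $\{\lambda_1,\dots,\lambda_d\}$ at distance $(\lambda_d-\lambda_{d+1})/2$ from the spectrum yields a factor $|\Gamma|\cdot(\lambda_d-\lambda_{d+1})^{-2}$, and $|\Gamma|$ is of order $\lambda_1-\lambda_{d+1}$, not of order the gap; so this does \emph{not} ``simplify to a bound of order $\|\tilde{D}-\tilde{D}_n\|/(\lambda_d-\lambda_{d+1})$'' in general. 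To recover the single power of the gap you need either the sharper integral estimate of \cite{Zwald2005} (exploiting that $\|R_A(z)\|=1/\mathrm{dist}(z,\sigma(A))$ is only large near the spectrum, so the integral of the product of resolvent norms over the contour is controlled by the gap rather than by the crude sup-times-length bound), or an explicit eigenprojector perturbation lemma of Davis--Kahan type. You correctly identify the other delicate point --- that $\|\tilde{D}-\tilde{D}_n\|$ must stay below half the gap for the contour to separate the spectrum of $\tilde{D}_n$ as well, which is what the sample-size condition enforces --- but the quantitative contour estimate itself needs repair before the claimed constant $4B/(\sqrt{n}(\lambda_d-\lambda_{d+1}))$ can be obtained.
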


From this theorem, we can derive the following lemma for the deviation 
between  $X^{d}_{\tilde{D}} {X^{d}_{\tilde{D}}}'$ and 
$X^{d}_{\tilde{D}_n}{X^{d}_{\tilde{D}_n}}'$. For 
the sake of simplification, we will use in the following the same notation $D$ 
(resp. $D_n$) for 
defining either the sample $D$ (resp. $D_n$) or its covariance matrix 
$\tilde{D}$ (resp. $\tilde{D}_n$).

\begin{lemma}Let $B$ s.t. for any $\mathbf{x}, \|\mathbf{x}\|\leq B$, let 
$X^d_{{D}}$ and 
$X^{d}_{{D}_n}$ the orthogonal projectors of the subspaces spanned by 
the first d eigenvectors of ${D}$ and ${D}_n$. Let  
$\lambda_1>\lambda_2>...>\lambda_d>\lambda_{d+1}\geq 0$ be the
first $d+1$ eigenvalues of ${D}$, then for any
$n\geq \left(\frac{4B}{(\lambda_d-\lambda_{d+1})}\left(1+\sqrt{\frac{\ln( 
1/\delta)}{2}}\right)\right)^2 $ with
probability at least $1-\delta$ we have:
$$
\|X^{d}_{{D}}{X^{d}_{{D}}}'-X^{d}_{{D}_n}{X^{d}_{{D}_n}}'\|\leq 
\frac{8\sqrt{d}}{\sqrt{n}}\frac{B}{(\lambda_d-\lambda_{d+1})}\left(1+\sqrt{\frac
{\ln (1/\delta)}{2}}\right)
$$
\label{lema:one}
\end{lemma}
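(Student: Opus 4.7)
The plan is to pass from the bound of Theorem~\ref{Zwald-blanchard05} on $\|X^d_D - X^d_{D_n}\|$ to a bound on the difference of the associated rank-$d$ symmetric matrices $X^d_D (X^d_D)'$ and $X^d_{D_n}(X^d_{D_n})'$ via a standard telescoping identity, and to absorb the extra factor $\sqrt{d}$ coming from the orthonormality of the columns.

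First, I would use the algebraic identity
\begin{equation*}
X^d_D (X^d_D)' - X^d_{D_n}(X^d_{D_n})'
= X^d_D\bigl((X^d_D)' - (X^d_{D_n})'\bigr) + \bigl(X^d_D - X^d_{D_n}\bigr)(X^d_{D_n})',
\end{equation*}
which is just $AA' - BB' = A(A'-B') + (A-B)B'$ applied with $A = X^d_D$ and $B = X^d_{D_n}$. The triangle inequality together with sub-multiplicativity of the chosen matrix norm then gives
\begin{equation*}
\|X^d_D (X^d_D)' - X^d_{D_n}(X^d_{D_n})'\|
\le \|X^d_D\|\cdot\|X^d_D - X^d_{D_n}\| + \|X^d_D - X^d_{D_n}\|\cdot\|X^d_{D_n}\|.
\end{equation*}

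Next I would exploit the fact that the columns of $X^d_D$ and $X^d_{D_n}$ are orthonormal families of $d$ eigenvectors. Hence each of these matrices has Frobenius norm exactly $\sqrt{d}$ (and operator norm $1$), so the two norm factors on the right each contribute $\sqrt{d}$. Combined with Theorem~\ref{Zwald-blanchard05}, which bounds $\|X^d_D - X^d_{D_n}\|$ by $\frac{4B}{\sqrt{n}(\lambda_d - \lambda_{d+1})}\bigl(1 + \sqrt{\ln(1/\delta)/2}\bigr)$ on an event of probability at least $1-\delta$ under the same sample-size condition, the overall bound picks up a factor $2\sqrt{d}$ and yields the claimed
\begin{equation*}
\frac{8\sqrt{d}}{\sqrt{n}}\,\frac{B}{\lambda_d - \lambda_{d+1}}\left(1 + \sqrt{\tfrac{\ln(1/\delta)}{2}}\right).
\end{equation*}

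The only subtle point is being consistent about which norm is used where: Theorem~\ref{Zwald-blanchard05} is stated for the perturbation of the projector/basis, and to get the factor $\sqrt{d}$ from $\|X^d_D\|$ one needs the mixed-norm inequality $\|AB\|_F \le \|A\|_F \|B\|_{op}$ (or, equivalently, to interpret the theorem's bound in operator norm and the lemma's bound in Frobenius norm). Once that convention is fixed, both remaining pieces are one-line arguments and no additional probabilistic work is needed, since the failure event is inherited directly from Theorem~\ref{Zwald-blanchard05}.
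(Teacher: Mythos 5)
Your proof is correct and follows essentially the same route as the paper: the identity $AA'-BB'=A(A'-B')+(A-B)B'$, the bounds $\|X^d_D\|\leq\sqrt{d}$ and $\|X^d_{D_n}\|\leq\sqrt{d}$, and an application of Theorem~\ref{Zwald-blanchard05} to both terms on the single high-probability event it provides. Your remark on keeping the Frobenius/operator norm conventions consistent is a reasonable clarification of a point the paper leaves implicit.
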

\begin{proof}
\begin{align*}
&\|X^{d}_{{D}}{X^{d}_{{D}}}'-X^{d}_{{D}_n}{X^{d}_{{D}_n}}'\|\\=&
\|X^{d}_{{D}}{X^{d}_{{D}}}'-X^{d}_{{D}}{X^{d}_{{D}_n}}'+X^{d}_{{D}}{X^{d}_{{D}_n
}}'-X^{d}_{{D}_n}{X^{d}_{{D}_n}}'\|\\
\leq&\|X^{d}_{{D}}\|\|{X^{d}_{{D}}}'-{X^{d}_{{D}_n}}'\|+\|X^{d}_{{D}}-X^{d}_{{D}
_n}\|\|{X^{d}_{{D}_n}}'\|\\
\leq&\frac{2\sqrt{d}}{\sqrt{n}}\frac{4B}{(\lambda_d-\lambda_{d+1})}\left(1+\sqrt
{\frac{\ln (1/\delta)}{2}}\right)
\end{align*}
The last inequality is obtained by the fact that the eigenvectors are 
normalized 
and thus $\|X_{{D}}\|\leq\sqrt{d}$ and application of 
Theorem~\ref{Zwald-blanchard05} twice. We now give a theorem for the projectors 
of our DA method.
\end{proof}

\begin{theorem}
Let ${X}_{{S}_{n}}^d$ (resp. ${X}_{T_{n}}^d$) be the d-dimensional projection 
operator built from the source (resp. target) sample of size $n_S$ (resp. 
$n_T$) 
and $X_{S}^d$ (resp. $X_{T}^d$) its expected value with the associated first 
$d+1$ eigenvalues $\lambda_1^S>...>\lambda_d^S>\lambda_{d+1}^S$ (resp. 
$\lambda_1^T>...>\lambda_d^T>\lambda_{d+1}^T$), then 
 we have with probability at least $1-\delta$
 $$
\|X_{{S}}^dM{X_{{T}}^d}'-X_{{S}_{n}}^dM_n{X_{{T}_n}^d}'\|\leq 
8d^{3/2}B\left(1+\sqrt{\frac{\ln( 
2/\delta)}{2}}\right) $$  $$ \times
\left(\frac{1}{\sqrt{n_S}(\lambda_d^S-\lambda_{d+1}^S)} 
+\frac{1}{\sqrt{n_T}(\lambda_d^T-\lambda_{d+1}^T)}\right)
$$
where $M_n$ is the solution of the optimization problem of 
Eq~\ref{eq:objective} using source and target samples of sizes $n_S$ 
and $n_T$ respectively, and $M$ is its expected value. 
\label{theorem:asm}
\end{theorem}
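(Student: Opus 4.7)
The plan is to exploit the closed-form expression for the optimizer derived in Section~\ref{sec:asmmethod}, namely $M = {X_S^d}' X_T^d$ at the population level and $M_n = {X_{S_n}^d}' X_{T_n}^d$ at the empirical level. Substituting these into the quantity of interest immediately rewrites it in terms of the orthogonal projectors $P_S := X_S^d {X_S^d}'$, $P_T := X_T^d {X_T^d}'$ and their sample analogues $P_{S_n}, P_{T_n}$:
$$X_S^d M\,{X_T^d}' = P_S P_T, \qquad X_{S_n}^d M_n\,{X_{T_n}^d}' = P_{S_n} P_{T_n}.$$
Thus the target bound is really a bound on $\|P_S P_T - P_{S_n} P_{T_n}\|$, which puts us back into the setting of Lemma~\ref{lema:one} applied independently to each domain.

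Next I would insert the cross-term $P_S P_{T_n}$ and apply the triangle inequality together with sub-multiplicativity of the norm:
$$\|P_S P_T - P_{S_n} P_{T_n}\| \le \|P_S\|\,\|P_T - P_{T_n}\| + \|P_S - P_{S_n}\|\,\|P_{T_n}\|.$$
The projector norms are handled by the same device as in the proof of Lemma~\ref{lema:one}: since each $X^d$ consists of $d$ orthonormal columns one obtains $\|P_S\|, \|P_{T_n}\| \le \sqrt{d}\cdot\sqrt{d} = d$. The two differences $\|P_S - P_{S_n}\|$ and $\|P_T - P_{T_n}\|$ are then bounded directly by Lemma~\ref{lema:one}, each contributing a factor of $\sqrt{d}$; multiplying by the projector-norm bound $d$ produces the advertised $d^{3/2}$ dependence, along with the correct eigengap and $1/\sqrt{n}$ factors for the respective domains.

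Finally, to produce a joint high-probability statement I would apply Lemma~\ref{lema:one} once with confidence $\delta/2$ on the source side and once with $\delta/2$ on the target side, and combine them with a union bound so that both inequalities hold simultaneously with probability at least $1-\delta$. This substitution $\delta\mapsto\delta/2$ is exactly what turns the $\ln(1/\delta)$ of Lemma~\ref{lema:one} into the $\ln(2/\delta)$ appearing in the statement, and collecting terms yields the two contributions $8 d^{3/2} B / (\sqrt{n_S}(\lambda_d^S - \lambda_{d+1}^S))$ and $8 d^{3/2} B / (\sqrt{n_T}(\lambda_d^T - \lambda_{d+1}^T))$ sharing the common factor $(1 + \sqrt{\ln(2/\delta)/2})$. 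The only real subtleties are (i) recognising that with the closed-form $M$ the whole problem collapses to a difference of products of projectors, so the telescoping argument works cleanly, and (ii) being careful with the confidence-budget split and the norm conventions so that the factor $d^{3/2}$ (rather than $d$ or $d^2$) emerges. Beyond these bookkeeping points the argument is direct algebra once the reduction to projectors has been made.
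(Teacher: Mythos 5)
Your proposal is correct and follows essentially the same route as the paper: substitute the closed-form solutions $M = {X_S^d}'X_T^d$ and $M_n = {X_{S_n}^d}'X_{T_n}^d$ to reduce the quantity to a difference of products of projectors, insert the cross-term $X_S^d{X_S^d}'X_{T_n}^d{X_{T_n}^d}'$, apply the triangle inequality and sub-multiplicativity, bound the projector norms by $d$, invoke Lemma~\ref{lema:one} once per domain, and combine via a union bound at confidence $\delta/2$ each to obtain the $\ln(2/\delta)$ term. No substantive differences from the paper's argument.
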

\begin{proof}
\small
\begin{align*}
&\|X_{{S}}^dM{X_{{T}}^d}'-X_{{S}_{n}}^dM_n{X_{{T}_n}^d}'\|=\\
&\|X_{{S}}^d{X_{{S}}^d}'X_{{T}}^d{X_{T
}^d}'-X_{{S}_{n}}^d{X_{{S}_{n}}^d}'X_{{T}_n}^d{X_{{T}_n}^d}'\|\\
=&\|X_{{S}}^d{X_{{S}}^d}'X_{{T}}^d{X_{{T}}^d}'-X_{{S}}^d{X_{{S}}^d}'X_{{T}_n}^d{
X_{{T}_n}^d}'+ \\ & X_
{S}^d{X_{{S}}^d}'X_{{T}_n}^d{X_{{T}_n}^d}'-X_{{S}_{n}}^d{X_{{S}_{n}}^d}'X_{{T}_n
}^d{X_{{T}_n}^d}
'\|\\
\leq&\|X_{{S}}^d{X_{{S}}^d}'X_{{T}}^d{X_{{T}}^d}'-X_{{S}}^d{X_{{S}}^d}'X_{{T}_n}
^d{X_{{T}_n}^d}
'\|+ \\ & 
\|X_{{S}}^d{X_{{S}}^d}'X_{{T}_n}^d{X_{{T}_n}^d}'-X_{{S}_{n}}^d{X_{{S}_{n}}^d
}'X_{{T}_n}^d{X_{
T_n}^d}'\|\\
\leq&\|X_{{S}}^d\|\|{X_{{S}}^d}'\|\|X_{{T}}^d{X_{{T}}^d}'-X_{{T}_n}^d{X_{{T}_n}
^d}'\|+ \\ & \|X_{S
}^d{X_{{S}}^d}'-X_{{S}_{n}}^d{X_{{S}_{n}}^d}'\|\|X_{{T}_n}^d\|\|{X_{{T}_n}^d}
'\|\\
\leq&8d^{3/2}B\left(1+\sqrt{\frac{\ln( 
2/\delta)}{2}}\right) \times \\ & 
\left(\frac{1}{\sqrt{n_S}(\lambda_d^S-\lambda_{d+1}^S)}
+\frac{1}{\sqrt{n_T}(\lambda_d^T-\lambda_{d+1}^T)}\right).
\end{align*}
\end{proof}
The first equality is obtained by replacing $M$ and $M_n$ by their 
corresponding 
optimal solutions $X_S^d{X_T^d}'$ and $X_{{S}_{n}}^d{X_{{T}_n}^d}'$ from  
Eq~\ref{eq:objectivesol}. The last inequality is obtained by applying 
Lemma~\ref{lema:one} twice and bounding the projection operators.

From Theorem~\ref{theorem:asm}, we can deduce a bound on the deviation between 
two successive 
eigenvalues. We can make use of this bound as a cutting rule for automatically 
determining the size of the subspaces.
Let $n_{min}=\min(n_S,n_T)$ and 
$(\lambda_d^{min}-\lambda_{d+1}^{min})=\min((\lambda_d^T-\lambda_{d+1}^T),
(\lambda_d^S-\lambda_{d+1}^S))$ and let $\gamma>0$ be a given allowed deviation 
such that:
$$
\gamma\geq \left(1+\sqrt{\frac{\ln 
2/\delta}{2}}\right)\left(\frac{16d^{3/2}B}{\sqrt{n_{min}}(\lambda_d^{min}
-\lambda_{d+1}^{min})}\right).
$$ Given a confidence $\delta>0$ and a fixed deviation $\gamma>0$, we can 
select the maximum dimension $d_{max}$ such that:
\begin{equation}
(\lambda_{d_{max}}^{min} -\lambda_{d_{max}+1}^{min})\geq 
\left(1+\sqrt{\frac{\ln 
2/\delta}{2}}\right)\left(\frac{16d^{3/2}B}{\gamma\sqrt{n_{min}}}\right).
\label{eq:bound}
\end{equation}

For each $ d \in \{d |1 \ldots d_{max} \}$, we then have the guarantee that 
$\|X_{{S}}^dM{X_{{T}}^d}'-X_{{S}_{n}}^dM_n{X_{{T}_n}^d}'\| \leq \gamma$. In 
other words, as long as we select a subspace dimension d such that $ d \leq 
d_{max} $, the solution $M^*$ is stable and not prone to over-fitting.

Now we use this theoretical result to obtain the subspace dimensionality 
for our method. We have proved a bound on the deviation between two successive 
eigenvalues. We use it to automatically determine the maximum size 
of the subspaces $d_{max}$ that allows to get a stable and non over-fitting 
matrix $M$. To find a suitable subspace dimensionality $(d^* < d_{max})$, we 
consider all the subspaces of size $d = 1$ to $d_{max}$ and select the best 
$d$ that minimizes the classification error using a two fold cross-validation 
over the labeled source data. Consequently, we assume that the source and the 
target subspaces have the same dimensionality. For more details on the 
cross-validation procedure see section \ref{sec:subdim2}.

\subsubsection{SA-MLE: Finding the subspace dimensionality using maximum 
likelihood estimation }
\label{sec:esa}

In this section we present an efficient subspace dimensionality estimation 
method using the maximum likelihood  estimation to be used in \textit{Subspace 
Alignment} for high dimensional data when NN classifier is used. We call 
this extension of our SA method \textit{SA-MLE}. There are two problems 
when applying SA method on high dimensional data (for example, Fisher vectors 
\cite{Perronnin2010}). First, the subspace dimensionality estimation using a  
cross-validation procedure as explained in section \ref{sec:subdim} and 
section \ref{sec:subdim2} could be computationally expensive. Secondly, since 
we compute the similarity metric in Eq. (\ref{eq:asmsym}) in the original 
$R^D$ space, the size of the resulting similarity metric $A$ is $R^{D \times 
D}$. For high dimensional data, this is not efficient. 

However, after obtaining the domain transformation matrix $M$ (using
Eq. \ref{eq:objective1}), any classification learning problem can be formulated 
in the 
\textit{target aligned source subspace} (i.e. $X_S M$) which has smaller 
dimensionality than the original space $R^D$. To reduce the computational 
effort, we propose to evaluate the sample dissimilarity 
($D_{SA-MLE}(\mathbf{y_{s}},\mathbf{y_{s}})$) between the target aligned source 
samples and the target subspace projected target samples using Euclidean 
distance as 
in Eq. \ref{eq:distance_eqESA} where $\mathbf{y_s}$ is the source sample and 
$\mathbf{y_t}$ is the target sample.

\begin{equation}
D_{SA-MLE}(\mathbf{y_{s}},\mathbf{y_{t}})= 
||\mathbf{y_{s}}X_SM-\mathbf{y_{t}}X_T||_2.
\label{eq:distance_eqESA}
\end{equation}


The source and target subspaces $X_s$ and $X_t$ are obtained by 
PCA as before. But now the size of the subspaces are different: 
the source subspace is of size $d_s$ and the target subspace is of size $d_t$.

dimensionality 
source 


One objective of the method described in this section is to retain the 
local neighborhood information after dimensionality reduction. The key reason 
for selecting this approach is that on average it preserve the local 
neighborhood information after the dimensionality reduction. This allows to 
preserve useful information in respective domains while adapting the source 
information to the target domain. With this purpose, we choose the domain 
intrinsic dimensionality obtained through the method presented in 
\cite{Levina2004}. Its objective is to derive the maximum likelihood estimator 
(MLE) of the dimension $d$ from i.i.d. observations. 

MLE estimator assumes that the observations represent an embedding of a lower 
dimensional sample. For instance, we can write $\mathbf{y} = \phi(\mathbf{z})$ 
where $\phi$
is a continuous and sufficiently smooth mapping, and $\mathbf{z}$  are sampled 
from a smooth density function $f$ on $\mathbf{R}^d$, with unknown 
dimensionality $d$ with $d<D$. In this setting, close neighbors in 
$\mathbf{R}^d$ are mapped to close neighbors in the embedding $\mathbf{R}^D$. 
Let's fix a point $\mathbf{y}$ and assume $f(\mathbf{y}) \approx const$ in a 
small sphere $S_{\mathbf{y}}(R)$ of radius $R$ around $\mathbf{y}$. The 
binomial process $\{N(t, \mathbf{y}); 0 \le t \le R\}$ which counts the 
observations within distance $t$ from $\mathbf{y}$ is

\begin{eqnarray}
N(t,\mathbf{y}) = \sum_{i=1}^{n} \textbf{1} \{ \mathbf{y}_i \in 
S_{\mathbf{y}}(t)\}~,
\label{eq:process}
\end{eqnarray}
where $\textbf{1}\{ \cdot \}$ is the indicator function.
By approximating (\ref{eq:process}) with a Poisson process it can be shown 
that the maximum likelihood estimate of the intrinsic dimensionality 
for the data point $\mathbf{y}$ is:
\begin{eqnarray}
\widehat{d}(\mathbf{y}) = \left[ \frac{1}{N(R,\mathbf{y})} 
\sum_{j=1}^{N(R,\mathbf{y})} \log \frac{R}{\Theta_j(\mathbf{y})}  \right]^{-1}
\label{eq:estimate}
\end{eqnarray}
where $\Theta_j(\mathbf{y})$ is the distance from sample $\mathbf{y}$ to its 
$j^{th}$ nearest neighbor \cite{Levina2004}.

For our experiments we set $R$ to the mean pair-wise 
distance among the samples. The intrinsic dimensionality of a domain 
is then obtained by the average 
$\widehat{d}=\frac{1}{n}\sum_{i=1}^n\widehat{d}(\mathbf{y}_i)$ 
over all its instances. The two domains are considered separately, which 
implies 
$d_S\neq d_T$~. For MLE based dimensionality estimation we use 
the implementation of \cite{VanderMaaten2008}.

\subsection{Using the labels of the source domain to improve subspace 
representation}
\label{sec:itmlpca}

In this paper we focus on unsupervised domain adaptation. So far we have 
created subspaces using PCA, which does not use available label information 
even 
from the source domain. In this section we further investigate whether 
the available label information from the source domain can be exploited to 
obtain a better source subspace. One possibility is to use a supervised 
subspace 
creation (dimensionality reduction) method such as partial least squares (PLS) 
or linear discriminant analysis (LDA). For example \cite{Gong2012} uses PLS to 
exploit the label information in the source domain. 

Using PLS or LDA for creating subspaces has two major issues. First, 
the subspaces created by LDA have a dimensionality equal to the number of 
classes. This clearly is a limitation. Moreover, using PLS or LDA only for the 
source domain and PCA for the target domain causes an additional discrepancy 
between the source 
and the target subspaces (as they are generated from different methods). For 
example, in the case of SA method, it is not clear how to apply consistency 
theorem on PLS-based subspaces. The subspace disagreement 
measure~\cite{Gong2012} (SDM)
 uses principle angles of subspaces to find a good subspace dimensionality.
We believe it is not valid to use PLS for the source domain and PCA for the 
target domain 
when SDM is used as the principle angles generated by PLS and PCA have different 
meanings. 
To overcome, these issues we propose a simple, yet interesting and 
effective method to create supervised subspaces for subspace-based DA methods.

Our method is motivated by recent metric learning-based cross-domain adaptation 
method of \cite{Saenko2010}. Saenko et al \cite{Saenko2010}, used information 
theoretic metric learning method \cite{Davis2007} (ITML) to construct 
cross-domain constraints to obtain a distance metric in semi-supervised domain 
adaptation. They use labelled samples from both the source and the target 
domains to construct a distance metric that can be used to compare a source 
sample with target domain samples. We also use the information theoretic metric 
learning (ITML) method \cite{Davis2007} to create a distance metric only for 
the 
source domain using the labeled source samples. In contrast to the work of 
Saenko et al. \cite{Saenko2010}, we use this learned metric to transform source 
data into the metric induced space such that the discriminative nature of the 
source data is preserved. Afterwards, we apply PCA on the transformed source 
data samples to obtain the eigenvectors for our subspace alignment method. The 
proposed new algorithm to create supervised source subspace is shown in Algo. 
\ref{algo:itmlxs}. 

The advantage of our method is that the source and target subspaces are still 
created using the same PCA algorithm as before. As a result we can still use 
the 
consistency theorem to find a stable subspace dimensionality. In addition, we 
show that not only it improves results for our SA method, but also for other 
subspace based methods such as GFK. 

\begin{algorithm}
\SetAlgoLined
\KwData{Source data $S$, Source labels $L_S$}
\KwResult{ Source subspace $X_s$}
1. Learn the source metric $W \leftarrow ITML(S,L_S)$  \;
2. Apply cholesky-decomposition to $W$. $W_c \leftarrow chol(W)$  \;
3. Project source data to $W_c$ induced space. $S_w \leftarrow S W_c$  \;
4. $X_s \leftarrow PCA(S_w)$ \;
\caption{Source Subspace learning algorithm with metric learning - (ITML-PCA 
method)}
\label{algo:itmlxs}
\end{algorithm}

\subsection{LMSA: Large-Margin Subspace Alignment}
\label{sec:LMSA}
Motivated by the above approach, we propose a more founded optimization 
strategy to incorporate the class information from the source domain during DA 
learning procedure. To preserve discriminative information in the source domain, 
we try to minimize pairwise distances between samples from the same class. At 
the same time utilizing triplets, we make sure that samples from different 
classes are well separated by a large margin. To achieve this motivation we 
modify the objective function in Eq. \ref{eq:objective1} as follows:

\begin{multline}
F(M) = || X_S M - X_T ||_F^{2} +  \beta_1 \sum_{(i,j) \in \Omega } d_{i,j}  + 
\\ 
\beta_2 \sum_{i,j,k} max( 0,1- (d_{i,k}-d_{i,j}) )
\label{eq:objective4}
\end{multline}

where $\Omega = \{ i,j \}$ is the set of all source image pairs from the same 
class. The triplet $\{i,j,k\}$ are created such that $\{i,j\}$ are from the same 
class and $j \in 3NN(i)$ while $\{i,k\}$ are from different classes ($i,j,k$ are 
from the source domain). We generate all possible such triplets. The parameters 
$\beta_1 , \beta_2>0$. The Euclidean distance ($d_{i,j}$) between pair of images 
projected into target aligned source subspace is defined as follows: 

\begin{equation}
d_{i,j}= ||\mathbf{y_{S_i}}X_SM-\mathbf{y_{S_j}}X_SM||_2.
\label{eq:distance_eq}
\end{equation}

We call this approach as LMSA.


\subsection{Divergence between source and target domains}
\label{sec:divergence}

If one can estimate the differences between domains, then this information can 
be used to estimate the difficulty of adapting the source domain to a specific 
target domain. Further, such domain divergence measures can be used to evaluate 
the effectiveness of domain adaptation methods. In this section we discuss two 
domain divergence measures, one suitable for global classifiers such as support 
vector machines and the other useful for local classifiers such as nearest 
neighbour classifiers. Note we consider, nearest neighbour as a local 
classifier 
as the final label of the test sample only depends on the local neighborhood 
distribution of the training data. At the same time we consider a SVM 
classifier 
as a global classifier as SVM possibly depends on all training samples.

Ben-David et al.~\cite{Ben-David2007} provide a 
generalization bound on the target error which depends on the source error and 
a measure of divergence, called the $H \Delta H$ divergence, between the source 
and target distributions $P(\chi_S)$ and $P(\chi_T)$. 

 \begin{equation}
 \epsilon_T(h) =  \epsilon_S(h) + d_{H \Delta H} (P(\chi_S),P(\chi_T)) + 
\lambda,
\label{eq:bendavid}
 \end{equation}

where $h$ is a learned hypothesis, $\epsilon_T(h)$ the generalization 
target error, $\epsilon_S(h)$ the generalization source error, and $\lambda$ 
the 
error of the ideal joint hypothesis on $S$ and $T$, which is supposed to be a 
negligible term if the adaptation is possible. 
Eq.~\ref{eq:bendavid} tells us that to adapt well, one has to learn a 
hypothesis 
which works well on $S$ while reducing the divergence between $P(\chi_S)$ and 
$P(\chi_T)$. To estimate $d_{H \Delta H} (P(\chi_S),P(\chi_T))$, a usual way 
consists in 
learning a linear classifier $h$ to discriminate between source and target 
instances, respectively pseudo-labeled with 0 and 1. In this context, the 
higher 
the error of $h$, the smaller the divergence. While such a strategy gives us 
some insight about the ability for a {\it global} learning algorithm (e.g. SVM) 
to be efficient on both domains, it does not seem to be suited to deal with 
{\it 
local} classifiers, such as $k$-nearest neighbors. To overcome this 
limitation,  we introduce a new empirical divergence specifically designed for 
local classifiers. Based on the recommendations of~\cite{Ben-David2012}, we 
propose a discrepancy measure to estimate the local density of a target point 
w.r.t. a given source point. This discrepancy, called \textit{Target density 
around source} \textbf{TDAS} counts on average how many target points can be 
found within a $\epsilon$ neighborhood of a source point. More formally: 

\begin{equation}
 TDAS = \frac{1}{n_S} \sum_{\forall \mathbf{y_S} } |\{ \mathbf{y_T} |
Sim(\mathbf{y_S},\mathbf{y_T})  \geq \epsilon \}|.
\label{eq:tdas}
\end{equation}

Note that \textbf{TDAS} is associated with similarity measure  
$Sim(\mathbf{y_S},\mathbf{y_T})=\mathbf{y_S} A \mathbf{y_T}'$ where $A$ is the 
learned metric. As we will see in the next section, \textbf{TDAS} can be used 
to 
evaluate the effectiveness of a DA method under the covariate shift assumption 
and probabilistic Lipschitzness assumption~\cite{Ben-David2012}. The larger the 
TDAS, the better the DA method.

\subsection{Mutual information perspective on subspace alignment}
\label{sec:mutual}
Finally, in this section we look at subspace alignment from a mutual 
information point of view. 
We start this discussion with a slight abuse of notations. We denote $H(S)$ as 
the entropy of the source data $S$ and $H(S,T)$ the cross entropy between 
the source data $S$ and the target data $T$. The mutual information between the 
source domain $S$ and the target domain $T$ then can be given as follows:

\begin{equation}
\begin{split}
MI(S;T) & = H(S) + H(T) - H(S,T)   \\
 & = H(T)   - D_{KL} (S || T). 
\end{split}
\label{eq:MI}
\end{equation}

According to Eq. (\ref{eq:MI}), if we maximize the target entropy (i.e. $H (T) 
$) and minimize the \textit{KL-divergence} (i.e. $D_{KL} (S || T)$ term), we 
maximize the mutual information $MI(S;T)$. In domain adaptation it makes sense 
to maximize the mutual information between the source domain and the target 
domain. If one projects data from all domains to the target subspace $X_T$, 
then 
this allows to increase the entropy term $H(T)$, hence the mutual information 
$MI(S;T)$. This could be a reason why projecting to the target subspace 
performs quite well for some DA problems as reported by our previous work 
\cite{Fernando2013} and other related work \cite{Gong2012}. We also project the 
target data to the target subspace which allows to maximize the term $H(T)$.

For the simplicity of derivation, we assume the source and the target data are 
drawn from a zero centered Gaussian distribution. Then the 
\textit{KL-divergence} between the source domain and the target domain can be 
written as follows:

\begin{eqnarray}
D_{KL} (S || T) = \frac{1}{2}  ( tr( \Sigma_t^{-1} \Sigma_s ) - d  
-ln( \frac{det(\Sigma_s)}{det(\Sigma_t)})) 
\label{eq:MIDKL}
\end{eqnarray}

where $\Sigma$ is the covariance matrix, $tr()$ is the trace of a matrix and 
$det()$ is the determinant of the matrix. The term $ \Sigma_t^{-1} 
\Sigma_s$ can be written as follows:

\begin{eqnarray}
\Sigma_t^{-1} \Sigma_s = X_s  \Lambda_s X'_s X_t \Lambda_t^{-1} X'_t
\label{eq:trace}
\end{eqnarray}

where $\Lambda$ is a diagonal matrix constituted of eigenvalues $\lambda$. 

The term in Eq. \ref{eq:trace} is minimized if basis vectors of $X_s$ are 
aligned with the basis vectors in $X_t$. In SA method, by aligning the two 
subspaces we indirectly minimize the $tr( \Sigma_t^{-1} \Sigma_s)$ term. 
Aligning all basis vectors does not contribute equally as the basis 
vectors with higher eigenvalues ($\lambda_s,\lambda_t$) are the most 
influential. 
So we have to align the most important $d$ number of basis vectors. This step 
allows us to reduce the KL-divergence between the source and the target domain. 
In summary, the subspace alignment method optimizes both criteria given in 
equation \ref{eq:MI}, which allows us to maximize the mutual information 
between 
the source distribution and the target distribution.

\section{Experiments}
\label{sec:experiments}

We evaluate our domain adaptation approach in the context of object recognition 
using a standard dataset and protocol for evaluating visual domain adaptation
methods as in~\cite{Chang2012,Gong2012,Gopalan2011,Kulis2011,Saenko2010}. 
In addition, we also evaluate our method using various other image 
classification datasets. We denote our subspace alignment method by SA and the 
maximum likelihood estimation based SA methods as SA-MLE. Unless specifically 
mentioned, by default we consider SA method in rest of the experiments.

First, in section \ref{sec:datasets} we present the datasets. Then in section 
\ref{sec:expsetup} we present the experimental setup. Experimental details 
about subspace dimensionality estimation for the consistency theorem-based SA 
method are presented in section \ref{sec:subdim2}. Then we evaluate considered 
subspace-based methods using two domain divergence measures in section 
\ref{sec:divergenceexp}. In section \ref{sec:classification-results} we 
evaluate the classification performance of the proposed DA method using three 
DA-based object recognition datasets. Then in section \ref{sec:expITML} we 
present the effectiveness of supervised subspace creation-based SA method. We 
analyze the performance of subspace-based DA methods when used with high 
dimensional data such as Fisher vectors in section \ref{sec:expESA}. In section 
\ref{sec:anaDictionary} we analyze the performance of DA methods by varying the 
dictionary size of the bag-of-words. We evaluate the effectiveness of 
subspace based DA methods using modern deep learning features in section 
\ref{sec:expDecaf}. The influence of z-normalization on DA methods is analyzed 
in section \ref{sec:zscore}. Finally, we compare our SA method with 
non-subspace-based methods in section~\ref{sec:nonsub}.

\subsection{DA datasets and data preparation}
\label{sec:datasets} 
 
We provide three series of classification experiments on different datasets. In 
the first series, we use the Office+Caltech-10~\cite{Gong2012} dataset that 
contains four domains altogether to evaluate all DA methods (see Figure 
\ref{fig:office-clatech-dataset}). The Office dataset 
\cite{Saenko2010} consists 
of images from web-cam (denoted by \textbf{W}), DSLR images (denoted by 
\textbf{D}) and Amazon images (denoted by \textbf{A}). The Caltech-10 images 
are 
denoted by \textbf{C}. We follow the same setup as in~\cite{Gong2012}. 
We use each source of images as a domain, consequently we get four domains 
(\textbf{A}, \textbf{C}, \textbf{D} and \textbf{W}) leading to 12 DA
problems. We denote a DA problem by the notation $ S \to T $. We use the image 
representations provided by~\cite{Gong2012} for Office and Caltech10 datasets 
(SURF features encoded with a visual dictionary of 800 words). We follow the 
standard protocol of~\cite{Gong2012,Gopalan2011,Kulis2011,Saenko2010} for
generating the source  and target samples.

\begin{figure*}[t]
 \centering
 \includegraphics[width=160mm]{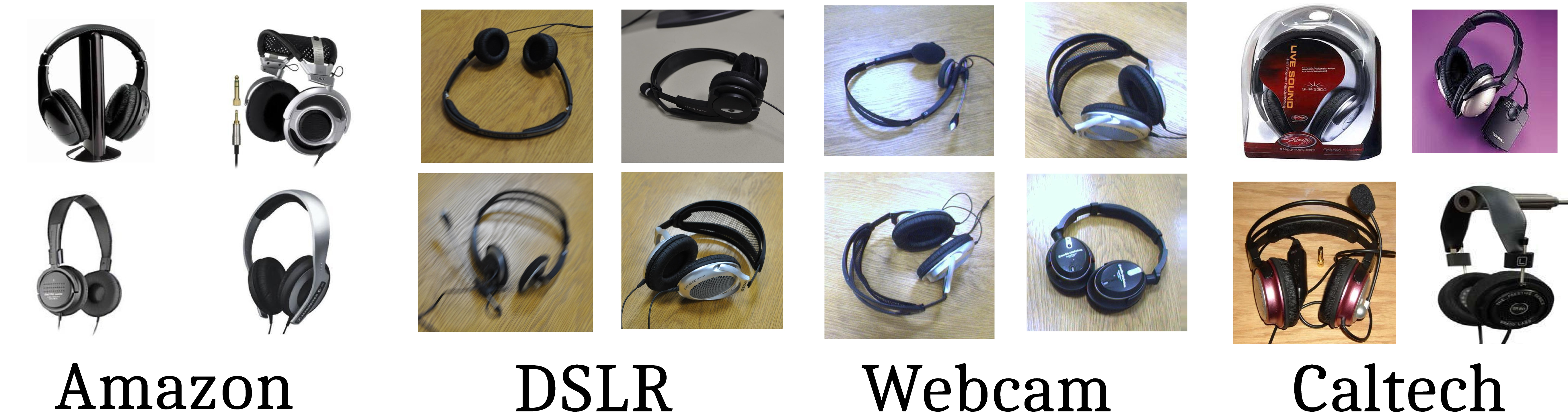}  
 \caption{Some example images from Office-Caltech dataset. This dataset consists 
of four visual domains, namely images collected from Amazon merchant website, 
images collected from a high resolution DSLR camera, images collected from a web 
camera and images collected from Caltech-101 dataset.}
 \label{fig:office-clatech-dataset}
 \end{figure*}

In a second series, we  evaluate the effectiveness of our DA method using other 
datasets, namely  ImageNet (\textbf{I}), LabelMe (\textbf{L}) and Caltech-256 
(\textbf{C}). In this setting we consider each dataset as a domain. We select 
five object categories common to all three datasets (bird, car, chair, dog and 
person) leading to a total of 7719 images. We extract dense SIFT features and 
create a  
bag-of-words dictionary of 256 words using kmeans. Afterwards, we use LLC 
encoding and a spatial pyramid ($2 \times 2$ quadrants + $3 \times 1$ 
horizontal 
+ 1 full image) to obtain a 2048 dimensional image representation (similar data 
preparation as in~\cite{Khosla2012}). 

In the last series,  we evaluate the effectiveness of our DA method using 
larger datasets, namely PASCAL-VOC-2007 and ImageNet. We select all the 
classes of PASCAL-VOC-2007. The objective here is to classify PASCAL-VOC-2007 
test images using classifiers that are learned from the ImageNet dataset. To 
prepare the data, we extract dense SIFT features and create a bag-of-words 
dictionary of 256 words using only ImageNet images. Afterwards, we use LLC 
encoding 
and spatial pyramids ($2 \times 2$ + $3 \times 1$ + 1) to obtain a 2048 
dimensional image representation. 

\subsection{Experimental setup}
\label{sec:expsetup} 

We compare our subspace  DA approach with three other DA methods and three 
baselines. Each of these methods defines a new representation space and our 
goal is to compare the performance of a 1-Nearest-Neighbor (NN) classifier and 
a linear SVM classifier on DA problems in the subspace found.

We naturally consider \textit{Geodesic Flow  Kernel} 
(\textbf{GFK} \cite{Gong2012}),  \textit{Geodesic Flow Sampling} 
(\textbf{GFS} \cite{Gopalan2011}) and \textit{Transfer Component Analysis} 
(\textbf{TCA}) \cite{Pan2009}. They have indeed demonstrated state of the art 
performances achieving better results than metric learning 
methods~\cite{Saenko2010} and better than those reported by Chang's method 
in~\cite{Chang2012}. Moreover, these methods are conceptually the closest to our 
approach. We 
also report results obtained by the following three baselines:

\begin{itemize}
 \item  \textbf{Baseline-S:} where we use the projection defined by the PCA 
subspace 
$X_S$ built from the source domain to project both source and target data and 
work in the resulting representation. 

\item \textbf{Baseline-T:} where we use 
similarly the projection defined by the PCA subspace $X_T$ built from the 
target 
domain. 
\item No adaptation \textbf{NA:} where no projection is made, i.e. we use  the 
original input space without learning a new representation.
\end{itemize}

For each method, we compare the performance of a 1-Nearest-Neighbor (NN) 
classifier and of a linear SVM classifier (we seek the best C parameter around 
the mean similarity 
value obtained from the training set) in the subspace defined by each method.
For each source-target DA problem in the first two series of experiments, we 
evaluate the accuracy of each method on the target domain over 20 random 
trials. For each trial, we consider an unsupervised DA setting where we 
randomly sample labeled data in the source domain as training data and 
unlabeled data in the target domain as testing examples. For the first series 
we use the typical setup as in \cite{Gong2012}. For the second series we use a 
maximum of 100 randomly sampled training images per-class. In the last series 
involving the PASCAL-VOC dataset, we evaluate the approaches by measuring 
the mean average precision over target data using SVM.

\subsection{Selecting the optimal dimensionality for the SA method using the
consistency theorem}
\label{sec:subdim2}
In this section, we present the procedure for selecting the space 
dimensionality in the context of the SA method. The same dimensionality is also 
used for 
Baseline-S and Baseline-T. For GFK and GFS we follow the published procedures 
to 
obtain optimal results as presented in~\cite{Gong2012}. First, we perform a PCA 
on the two domains and compute the deviation 
$\lambda_d^{min}-\lambda_{d+1}^{min}$ for all possible $d$  
values. Then, using the theoretical bound of Eq:~\ref{eq:bound}, we can
estimate a $d_{max} << D$ that provides a stable solution with fixed deviation 
$\gamma>0$ for a given confidence $\delta>0$. Afterwards, we consider the
subspaces of dimensionality from $d=1$ to $d_{max}$ and select the best $d^*$ 
that minimizes the classification error using a 2 fold cross-validation
over the labelled source data. This procedure is founded by the theoretical 
result of Ben-David et al. of Eq~\ref{eq:bendavid} where the idea is to try to 
move closer the domain distribution while maintaining a good accuracy on the 
source domain. As an illustration, the best dimensions for the Office+Caltech 
dataset varies between $10-50$. For example, for the DA problem $\mathbf{W}
\to \mathbf{C}$, taking  $\gamma = 10^5$ and $\delta=0.1$, we obtain  
$d_{max} = 22$ (see Figure~\ref{fig:consistency})  and by cross validation we 
found that the optimal dimension is $d^*=20$.

\begin{figure}[t]
 \centering
 \includegraphics[width=55mm]{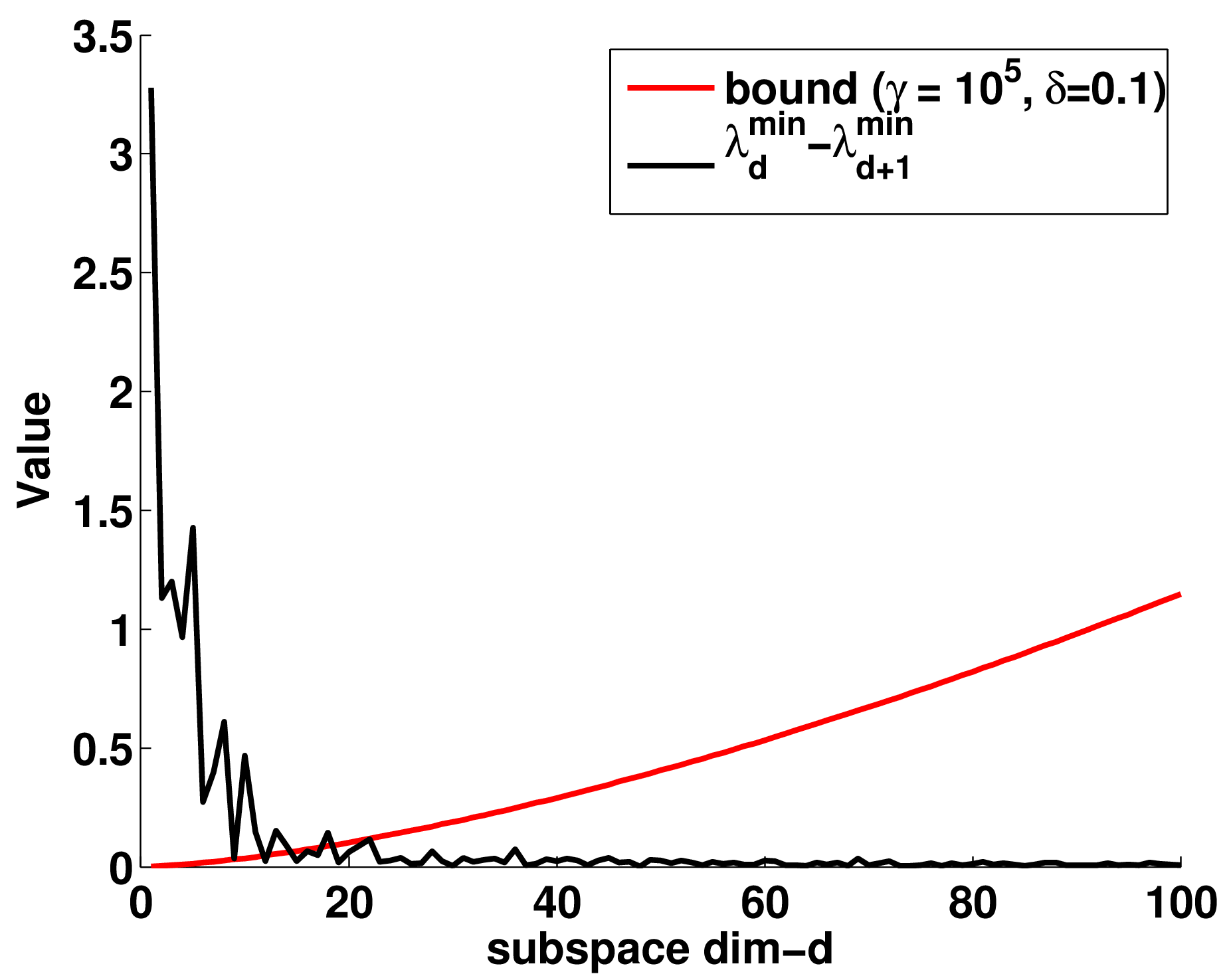}  
 \caption{Finding a stable solution and a subspace dimensionality
   using the consistency theorem. We plot the bounds for  $W \rightarrow C $ DA 
problem taking $\gamma=10^5$ and $\delta=0.1$. The upper bound is plotted in 
red 
color and the difference in consecutive eigenvalues in black color. From this 
plot we select the  $d_{max} = 22$ .  }
 \label{fig:consistency}
\end{figure}

\subsection{Evaluating DA with divergence measures}
\label{sec:divergenceexp}

Here, we evaluate the capability of our SA method to move closer the 
domain distributions according to the measures presented in 
Section~\ref{sec:divergence}: the TDAS adapted to NN classification 
where a high value indicates a better distribution closeness and the $H \Delta 
H $ using a SVM where a value close to 50 indicates close distributions. 

To compute $H \Delta H $ using a SVM we use the following protocol. For each 
baseline method, SA and GFK we apply DA using both source and target data. 
Afterwards, we give label $+1$ for the source samples and label $-1$ for the 
target samples. Then we randomly divide the source samples into two sets of 
equal size. The source train set ($S_{train}$) and source test set 
($S_{test}$). 
We repeat this for the target samples and obtain target train set 
($T_{train}$) and target test set ($T_{test}$). Finally, we train a linear SVM 
classifier using ($S_{train}$) and ($T_{train}$) as the training data and 
evaluate on the test set consisting of ($S_{test}$) and ($T_{test}$). The final 
classification rate obtained by this approach is an empirical estimate of $H 
\Delta H$.  

To compute $TDAS$ we use a similar approach. $TDAS$ is always associated with a 
metric as we need to compute the similarity $Sim(\mathbf{y_S},\mathbf{y_T})=
\mathbf{y_S} A \mathbf{y_T}'$. For Baseline-S, the metric is $XsXs'$. For 
Baseline-T, the metric is $XtXt'$. For GFK we obtain the metric as explained 
in~\cite{Gong2012}. For SA metric is $XsXs'XtXt'$. We set $\epsilon$ to the mean 
similarity between the 
source sample and the nearest target sample.

We compute these discrepancy measures for the 12 DA problems coming from the 
Office and the Caltech-10 datasets and report the mean values over the 12 
problems 
for each method in Table~\ref{tbl:divergence}. We can remark that our approach 
reduces significantly the discrepancy between the source and target domains 
compared to the other baselines (highest TDAS value and lowest $H \Delta H$ 
measure). Both GFK and our method have lower $H \Delta H$ values meaning that 
these methods are more likely to perform well.

\begin{table}[t]
\centering
\small
\begin{tabular}{ | c | c | c | c | c | c |}\hline
Method 		& NA 	& Baseline-S & Baseline-T & GFK  & SA \\ \hline
TDAS   		& 1.25 	& 3.34 	     & 2.74       & 2.84 & \tr{4.26}  \\ \hline 
H$\Delta$H	& 98.1 	& 99.0       & 99.0       & 74.3 & \tr{53.2}  \\ \hline
\end{tabular} 
\caption{Several distribution discrepancy measures averaged over 12 DA 
problems using Office dataset.}
\label{tbl:divergence}
\end{table}

\subsection{Classification Results}
\label{sec:classification-results}

\subsubsection{Visual domain adaptation performance with Office/Caltech10 
datasets}
\label{sec:expOfficecal}

In this experiment we evaluate the different DA methods using 
Office~\cite{Saenko2010}/Caltech10~\cite{Gopalan2011} datasets which consist of 
four domains (\textbf{A}, \textbf{C}, \textbf{D} and \textbf{W}). The results 
for the 12 DA problems in the unsupervised setting using a NN classifier are 
shown in Table~\ref{tbl:office_knn_unsup}. In 9 out of the 12 DA problems our 
method outperforms the other ones. It is interesting to see that projecting to 
the target domain (Baseline-T) works quite well for some DA problems. 
The main 
reason for this could be that projecting to the target subspace allows 
maximizing the mutual information between the projected source and the 
projected 
target data.

The results obtained with a SVM classifier in the unsupervised DA case are 
shown in Table~\ref{tbl:office_svm_unsup}. Our SA method outperforms all the 
other methods in 9 DA problems. These results indicate that our method works 
better than other DA methods not only for NN-like local classifiers  but also 
with more global SVM classifiers.

\begin{table}[t]
\centering
\small
\begin{tabular}{ | c | c | c | c | c | c | c | }\hline
Method & \dt{C}{A} & \dt{D}{A} & \dt{W}{A} & \dt{A}{C} & \dt{D}{C} &  
\dt{W}{C}\\ \hline \hline
NA	   & 21.5    & 26.9    & 20.8    & 22.8    & 24.8    & 16.4 \\\hline
Baseline-S & 38.0    & 29.8    & 35.5    & 30.9    & 29.6    & 31.3 \\ \hline
Baseline-T &\tr{40.5}& 33.0    &\tr{38.0}& 33.3    & 31.2    & 31.9 \\ \hline
GFS 	   & 36.9    & 32      & 27.5    & 35.3    & 29.4    & 
21.7 \\ \hline
GFK  	   & 36.9    & 32.5    & 31.1    &\tr{35.6}& 29.8    & 
27.2 \\ \hline 
TCA & 34.7	& 27.5	& 34.1	& 28.8	& 28.8	& 30.5 \\ \hline

SA   & 39.0    &\tr{38.0}& 37.4    & 35.3    &\tr{32.4}&\tr{32.3}\\\hline 
\hline
Method & \dt{A}{D} & \dt{C}{D} & \dt{W}{D} & \dt{A}{W} & \dt{C}{W} &  
\dt{D}{W}\\ \hline
NA	  & 22.4  & 21.7  & 40.5  & 23.3  & 20.0  & 53.0 \\ \hline
Baseline-S & 34.6  & 37.4  & 71.8  & 35.1  & 33.5  & 74.0 \\ \hline
Baseline-T& 34.7  & 36.4  & 72.9  & 36.8  & 34.4  & 78.4 \\ \hline
GFS       & 30.7  & 32.6  & 54.3  & 31.0  & 30.6  & 66.0 \\ 
\hline
GFK  	  & 35.2  & 35.2  & 70.6  & 34.4  & 33.7  & 74.9 \\ 
\hline 
TCA & 30.4	& 34.7	& 64.4	& 30.3 &	28.8	& 70.9 \\ \hline 

SA  & \tr{37.6}  & \tr{39.6}& \tr{80.3}  & \tr{38.6}  & \tr{36.8}  & 
\tr{83.6} \\ \hline  
\end{tabular} 
\caption{Recognition accuracy with unsupervised DA using NN
classifier (Office dataset + Caltech10).}
\label{tbl:office_knn_unsup}
\end{table}

\begin{table}[t]
\centering
\small
\begin{tabular}{ | c | c | c | c | c | c | c | }\hline
Method & \dt{C}{A} & \dt{D}{A} & \dt{W}{A} & \dt{A}{C} & \dt{D}{C} &  
\dt{W}{C}\\ \hline \hline
NA & 44.0 & 34.6 & 30.7 & 35.7 & 30.6 & 23.4 \\ \hline
Baseline-S   & 44.3 & 36.8 & 32.9 & 36.8 & 29.6 & 24.9 \\\hline
Baseline-T  & 44.5 & 38.6 & 34.2 & 37.3 & 31.6 & 28.4 \\\hline
GFK 	    & 44.8 & 37.9 & 37.1 & 38.3 & 31.4 & 29.1 \\\hline
TCA         & \tr{47.2}  &	38.8&	34.8&	\tr{40.8}&	33.8&	30.9 
\\\hline

SA	& 46.1 & \tr{42.0} & \tr{39.3} & 39.9 & \tr{35.0} & \tr{31.8}  
\\\hline \hline
Method & \dt{A}{D} & \dt{C}{D} & \dt{W}{D} & \dt{A}{W} & \dt{C}{W} &  
\dt{D}{W}\\ \hline
NA  & 34.5 & 36.0 & 67.4 & 26.1 & 29.1 & 70.9 \\ \hline
Baseline-S  & 36.1 & 38.9 & 73.6 & \tr{42.5} & 34.6 & 75.4 \\\hline
Baseline-T  & 32.5 & 35.3 & 73.6 & 37.3 & 34.2 & 80.5 \\\hline
GFK 	      & 37.9 & 36.1 & 74.6 & 39.8 & 34.9 & 79.1 \\ \hline
TCA & 36.4&	39.2&	72.1&	38.1&	36.5&	80.3 \\ \hline

SA   	    & \tr{38.8} & \tr{39.4} & \tr{77.9} & 39.6 & \tr{38.9} & \tr{82.3}  
\\\hline 

\end{tabular} 
\caption{Recognition accuracy with unsupervised DA using SVM
classifier(Office dataset + Caltech10).}
\label{tbl:office_svm_unsup}
\end{table}

\subsubsection{Domain adaptation on ImageNet, LabelMe and Caltech-256 
(\textbf{ILC-5}) datasets} 
\label{sec:expImagenet} 

Results obtained for unsupervised DA using NN classifiers on \textbf{ILC-5} 
datasets are shown in Table~\ref{tbl:imnet_knn_unsup}.  First, it is remarkable 
that
all the other DA methods achieve poor accuracy when LabelMe images are 
used as the source domain (even below NA), while our method seems to adapt the 
source to the 
target reasonably well. On average, our method significantly outperforms all 
other DA methods.

A visual example where we classify ImageNet images using models trained on 
Caltech-256 images is 
shown in Figure~\ref{fig:visual} and Figure~\ref{fig:visual1}. The nearest 
neighbor coming from Caltech-256 corresponds to the same class, even though the 
appearance of images are very different for the two datasets.

In Table~\ref{tbl:imnet_svm_unsup}  we report results using a SVM classifier 
for the unsupervised DA setting. In this case our method systematically 
outperforms all other DA methods, confirming the good behavior of our approach.

\begin{table*}[t]
\centering
\begin{tabular}{ | c | c | c | c | c | c | c |c | }\hline
Method & \dt{L}{C} & \dt{L}{I} & \dt{C}{L} & \dt{C}{I} & \dt{I}{L} & 
\dt{I}{C} & AVG \\ \hline \hline

NA 	 & 46.0 & 38.4 & 29.5 & 31.3 & 36.9 & 45.5 & 37.9 \\ \hline
Baseline-S& 24.2 & 27.2 & 46.9 & 41.8 & 35.7 & 33.8 & 34.9 \\ \hline
Baseline-T& 24.6 & 27.4 & \tb{47.0} & \tr{42.0} & 35.6 & 33.8 & 35.0 \\ \hline
GFK 	 & 24.2 & 26.8 & 44.9 & 40.7 & 35.1 & 33.8 & 34.3 \\ \hline
TCA      & 25.7	& 27.5 & 43.1 &	38.8 & 29.6 & 26.8 & 31.9 \\ \hline
SA  & \tr{49.1} & \tr{41.2} & \tb{47.0} & 39.1 & \tr{39.4} & \tr{54.5} & 
\tr{45.0} \\ \hline
\end{tabular} 
\caption{Recognition accuracy with unsupervised DA with NN
classifier (ImageNet (I), LabelMe (L) and Caltech-256 (C)).}
\label{tbl:imnet_knn_unsup}
\end{table*}

\begin{table*}[t]
\centering

\begin{tabular}{ | c | c | c | c | c | c | c |c | }\hline
Method & \dt{L}{C} & \dt{L}{I} & \dt{C}{L} & \dt{C}{I} & \dt{I}{L} & 
\dt{I}{C} & AVG \\ \hline \hline
NA       & 49.6 & 40.8 & 36.0 & 45.6 & 41.3 & 58.9 & 45.4 \\ \hline
Baseline-S& 50.5 & 42.0 & 39.1 & 48.3 & 44.0 & 59.7 & 47.3 \\ \hline
Baseline-T& 48.7 & 41.9 & 39.2 & 48.4 & 43.6 & 58.0 & 46.6 \\ \hline
GFK      & 52.3 & 43.5 & 39.6 & 49.0 & 45.3 & 61.8 & 48.6 \\ \hline
TCA      & 46.7	& 39.4 & 37.9 &	47.2 & 41.0 & 56.9 & 44.9 \\ \hline
SA  		 & \tr{52.9} & \tr{43.9} & \tr{43.8} & \tr{50.9} & \tr{46.3} & 
\tr{62.8} & \tr{50.1} \\ \hline
\end{tabular} 
\caption{Recognition accuracy with unsupervised DA with SVM
classifier (ImageNet (I), LabelMe (L) and Caltech-256 (C)).}
\label{tbl:imnet_svm_unsup}
\end{table*}

All these results suggest that classifying \textit{Caltech-256} 
images from various other sources seems a difficult task for most of the 
methods (--see Table \ref{tbl:office_knn_unsup} and Table 
\ref{tbl:imnet_knn_unsup}). By analyzing results from section 
\ref{sec:expImagenet}, we can see that ImageNet is a good source to classify 
images from Caltech-256 and LabelMe datasets.

\subsubsection{Classifying PASCAL-VOC-2007 images using classifiers trained
  on ImageNet}
\label{sec:expVOC} 

In this experiment, we compare the average precision obtained on 
PASCAL-VOC-2007 by a SVM classifier in an unsupervised DA setting. We use 
ImageNet as the source domain and PASCAL-VOC-2007 as the target domain. 
The results are shown in Figure~\ref{fig:daimnetvoc}.

Our method achieves the best results for all the categories, GFK improves by 
7\% in mAP over no adaptation while our method improves by 27\% in mAP over GFK.

\begin{figure*}[t]
 \centering
 \includegraphics[width=\textwidth]{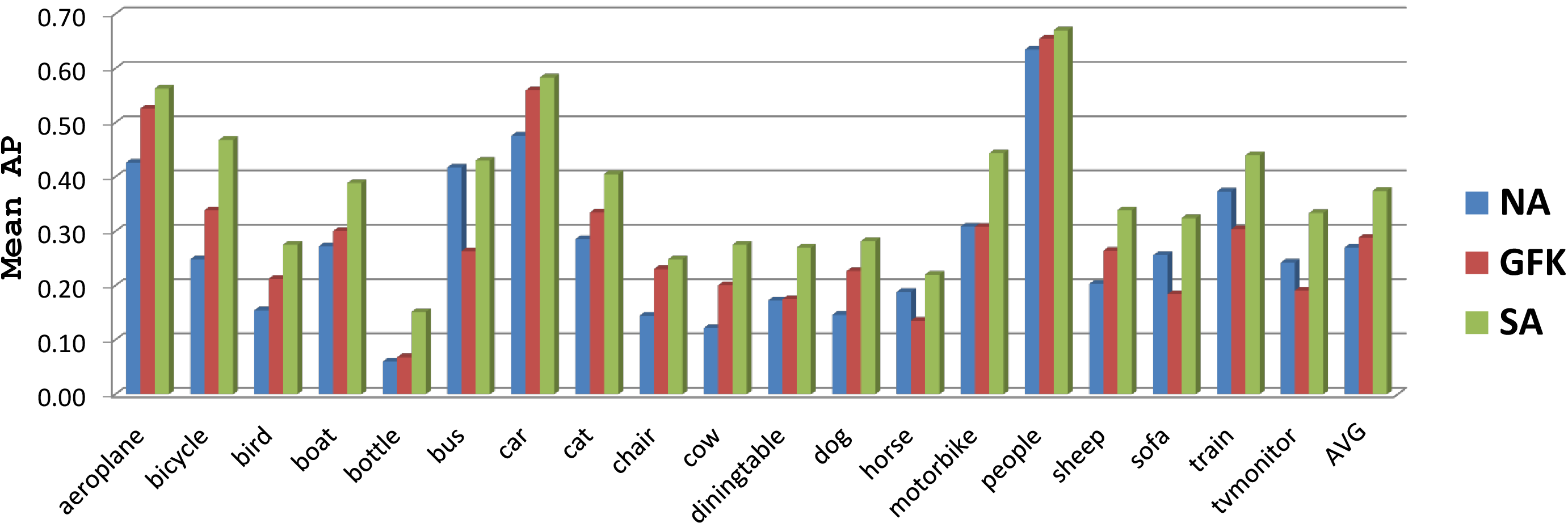}
 \caption{Train on ImageNet and classify PASCAL-VOC-2007 images using 
unsupervised DA with a linear SVM classifier. Average average precision over 20 
object classes is reported.}
 \label{fig:daimnetvoc}
\end{figure*}

In section \ref{sec:expOfficecal},\ref{sec:expImagenet} and \ref{sec:expVOC} we 
evaluate several domain adaptation algorithms using both NN and SVM 
classifiers. 
In all three cases, SA method outperforms TCA, GFS, GFK and baseline 
methods. The target accuracy obtained with the NN classifier is comparable with 
the ones obtained with  the SVM classifier for \textit{Office+Caltech10} 
dataset. On the other hand for $ICL-5$ dataset all methods get a boost in 
results when used with an SVM classifier. As a result in the rest of the 
experiments we 
use NN classifier whenever we use \textit{Office+Caltech10} dataset for 
evaluation.

\subsection{Evaluating methods that use source labels during DA}
\label{sec:expITML}

In this section we evaluate the effect of supervised \textit{ITML-PCA} subspace 
creation method presented in section~\ref{sec:itmlpca} and the large margin 
subspace alignment (LMSA) method introduce in section~\ref{sec:LMSA}. For this 
experiment we use the \textit{Office-Caltech-10} dataset. We compare several 
supervised 
subspace-based DA methods in Table \ref{tbl:itmlpca}, namely PLS, LDA and 
ITML-PCA method introduce in section~\ref{sec:itmlpca}.

\begin{table*}[t]
\centering
\begin{tabular}{  l  c  c  c  c  c  c  c}\hline
Method & \dt{C}{A} & \dt{D}{A} & \dt{W}{A} & \dt{A}{C} & \dt{D}{C} &  
\dt{W}{C}\\ \hline \hline

GFK (PLS,PCA)      & 40.4 & \tr{36.2} & 35.5 & 37.9 & 32.7 & 29.3 &\\ 
GFK (LDA,PCA)      & \tr{41.6} & \tr{36.2} & 39.9	& 31.9 & 31.0 & 
\tr{36.7} &\\
GFK (ITML-PCA,PCA) & 41.0 & 34.9 & \tr{40.3} & \tr{39.2} & \tr{35.4} & 36.0 &\\ 
\hline
SA (PLS,PCA)        & 32.0 & 32.0 & 31.0 & 32.8 & 29.8 &	24.8 &\\
SA (LDA,PCA)&\tr{48.3}&37.6&41.6&35.7&34.3&39.2\\
SA (ITML-PCA,PCA)&47.1& \tr{40.0}&42.4&\tr{41.1}&\tr{38.1}&39.8  \\ 
LMSA  &	45.9 & 36.9 & \tr{43.6} & 40.0 & 35.6 & \tr{40.4}\\ \hline \hline

Method&\dt{A}{D}&\dt{C}{D}&\dt{W}{D}&\dt{A}{W}&\dt{C}{W}&\dt{D}{W} & 
AVG.\\ \hline 
GFK (PLS,PCA)	   &35.1 	& 41.1 &71.2&35.7&35.8&79.1&42.5\\ 
GFK (LDA,PCA)	   &\tr{35.5} 	& 37.1 &68.9&\tr{37.0}&\tr{37.1}&76.9&42.5\\  
GFK (ITML-PCA,PCA) &\tr{35.5}	& 35.1 
&\tr{74.6}&36.1&36.0&\tr{79.8}&\textbf{43.7}\\ \hline
SA (PLS,PCA) 	   &32.3 	& 35.4      &71.1&34.0&34.1&75.0&38.7\\ 
SA (LDA,PCA)       &32.0	&34.0	&58.6&35.2&\tr{46.0}&73.6&43.0\\
SA (ITML-PCA,PCA)  &\tr{43.7}   & 40.4      &\tr{83.0}&\tr{43.5}&42.8& 
\tr{84.5}& \textbf{48.9}\\ 
LMSA & 41.4 & \tr{44.1} & 76.4 & 40.4 & 40.4 & 81.1 & 47.2 \\ \hline
\end{tabular} 
\caption{Recognition accuracy with unsupervised DA using NN classifier (Office 
dataset + Caltech10) using the supervised dimensionality reduction in the 
source 
domain. We compare the effectiveness of ITML-PCA method for both GFK and SA 
methods.}
\label{tbl:itmlpca}
\end{table*}

As can be seen from the Table \ref{tbl:itmlpca}, SA(ITML-PCA,PCA) performs 
better than SA(LDA,PCA) and SA(PLS,PCA). \\
SA(LDA,PCA) reports on average a 
mean accuracy of 43.0\% over 12 DA problems while the SA(ITML-PCA,PCA) 
method reports the best mean accuracy of 48.5 \%. ITML-PCA method also 
improves GFK results by 1.2\% showing the general applicability of this 
approach. GFK(PLS,PCA) and GFK(LDA,PLS) report a mean accuracy of 42.5\% 
while GFK(ITML-PCA,PCA) reports an accuracy of 43.7 \%. This clearly 
shows that the (ITML-PCA) strategy obtains better results for GFK as 
well as for the SA method. SA(PLS,PCA) reports a very poor accuracy of 
38.7\%. This could be due to the fact that when PLS is used, the upper
 bound obtained by consistency theorem does not hold anymore and we 
are likely to obtain a subspace dimension that only works in the 
source domain. In contrast, the ITML-PCA method still uses PCA 
to create a linear subspace and so the consistency theorem holds.

ITML uses both similarity and dissimilarity constraints using an information 
theoretic procedure. The learned metric $W$ in Algorithm (\ref{algo:itmlxs}) is 
regularized by $logdet$ regularization during the ITML procedure. The metric 
learning objective makes the subspace discriminative while allowing the domain 
transfer using SA method. From this result we conclude that $SA(ITML-PCA,PCA)$ 
is a better approach for subspace alignment-based domain adaptation as well as 
GFK method. In future work we plan to incorporate ITML \cite{Davis2007} like 
objective directly in the subspace alignment objective function similar to LMSA 
method.

\subsection{Evaluating SA-MLE method using high dimensional data}
\label{sec:expESA}

\begin{table*}[t]
\centering
\begin{tabular}{ 
|@{\hspace{1mm}}c@{\hspace{1mm}}|@{\hspace{1mm}}c@{\hspace{1mm}}|@{\hspace{1mm}}
c@{\hspace{1mm}}|@{\hspace{1mm}}c@{\hspace{1mm}}|@{\hspace{1mm}}c@{\hspace{1mm}}
|}\hline
Method & D $\rightarrow$ W & W $\rightarrow$ D & A $\rightarrow$ W  & AVG  \\ 
\hline
NA & 62.7 $\pm$ 1.1	& 64.7 $\pm$ 2.2 & 17.1 $\pm$ 2.1 & 48.2 $\pm$ 1.7\\ 
\hline

SA & \tr{71.6 $\pm$ 0.9} & \tr{70.6 $\pm$ 1.6} & 17.0 $\pm$ 1.6 & \tr{53.1 
$\pm$ 1.8} \\ \hline

SA-MLE &68.9$\pm$2.1 &68.0 $\pm$ 1.3  & 16.7 $\pm$ 1.4& 51.2 $\pm$1.6 \\\hline
\end{tabular} 
\caption{
Recognition accuracy\% obtained on the Office dataset when images are encoded 
with SURF features and Fisher Vectors of 64 GMM components.
}
\label{TBL:Office}
\end{table*}

Despite having a large dimensionality, \textit{Fisher vectors} 
\cite{Perronnin2010} have proven to be a robust image representation for image 
classification task. One limitation of subspace alignment is the computational 
complexity associated with high dimensional data (see section \ref{sec:esa}). 
We overcome this issue using \textbf{SA-MLE} method.

In this experiment we also use the \textit{Office} dataset \cite{Saenko2010} 
as well as the \textit{Office+Caltech-10} dataset. The reason is that the 
\textit{Office} dataset consists of more object classes and images (even though 
it has only three domains).  We extract SURF \cite{Bay2008} features to create 
a 
Gaussian mixture model (GMM) of 64 components and encode images using Fisher 
encoding \cite{Perronnin2010}. The GMM is created from Amazon images from the 
\textit{Office} dataset.

Results on the Office dataset using Fisher vectors are shown in Table 
\ref{TBL:Office} while in Table~\ref{TBL:Office+Caltech} results for 
\textit{Office+Caltech-10} is reported. These results indicate the effectiveness 
of 
SA-MLE method considering the fact that it can estimate the subspace 
dimensionality almost 100 times faster than SA method when used with Fisher 
vectors. On average, SA-MLE outperforms no adaptation (NA) results. 
Since MLE method returns two different subspace dimensionality for the source 
and the target, it is not clear how we can apply MLE method on GFK.

Note that SA-MLE is fast and operates in the low-dimen\-sional target subspace. 
More importantly SA-MLE seems to work well with Fisher vectors. The 
computational complexity of SA-MLE  equal to $\mathcal{O}(\hat{d}^2)$ where 
$d$ is the target subspace dimensionality. On the other hand, an estimate of 
the computational complexity of SA is $\mathcal{O}(D^2 \times d_{max})$ due to 
the cross validation procedure necessary to establish the optimal subspace 
dimensionality. Given the fact that SA-MLE is way faster than SA method and 
obtain reasonable results, we recommend SA-MLE for real time DA methods. Also 
we recommend to use SA-MLE method if the dimensionality of data is larger and 
when computational time is an issue.

\begin{table*}[t]
\small
\centering
\begin{tabular}{ 
|@{\hspace{1mm}}c@{\hspace{1mm}}|@{\hspace{1mm}}c@{\hspace{1mm}}|@{\hspace{1mm}}
c@{\hspace{1mm}}|@{\hspace{1mm}}c@{\hspace{1mm}}|@{\hspace{1mm}}c@{\hspace{1mm}}
|@{\hspace{1mm}}c@{\hspace{1mm}}|@{\hspace{1mm}}c@{\hspace{1mm}}|@{\hspace{1mm}}
c@{\hspace{1mm}}|@{\hspace{1mm}}c@{\hspace{1mm}}|@{\hspace{1mm}}c@{\hspace{1mm}}
|@{\hspace{1mm}}c@{\hspace{1mm}}|@{\hspace{1mm}}c@{\hspace{1mm}}|@{\hspace{1mm}}
c@{\hspace{1mm}}|@{\hspace{1mm}}c@{\hspace{1mm}}|}\hline
Method		& C $\rightarrow$ A   	&  D$\rightarrow$A   	&  
W$\rightarrow$A   	&  A$\rightarrow$C   	&  D$\rightarrow$C   &  
W$\rightarrow$C   &  A$\rightarrow$D   &  C$\rightarrow$D   &  W$\rightarrow$D   
&  A$\rightarrow$W   &  C$\rightarrow$W   &  D$\rightarrow$W   &  AVG \\ \hline
NA    &  44.3   &  35.7   &  34.2   &  36.9   &  33.9   &  30.8   &  37.4 &  
42.1   &  76.7   &  35.2   &  32.2   &  82.7   &  43.5\\ \hline
SA & 48.2 & 44.6 & 43.1 & 40.3 & 38.4 & 35.9 & 45.0 & 51.6 & 87.6 & 44.8 & 44.4 
& 90.1 & \textbf{51.2} \\ \hline    
SA-MLE   &  48.0   &  39.0   &  40.0   &  39.9   &  37.5   &  33.0   &  43.9   
&  50.8   &  86.3   &  40.7   &  40.5   &  88.7   &  49.0\\ \hline
\end{tabular} 
\caption{Recognition accuracy\% obtained on the Office+Caltech-10 dataset when 
images are encoded with SURF features and Fisher Vectors of 64 GMM components.}
\label{TBL:Office+Caltech}
\end{table*}

\subsection{Effect of dictionary size on domain adaptation}
\label{sec:anaDictionary}

In this experiment we change the dictionary size of BOVW image representation 
and evaluate the performance on the target domain. For this experiment we use 
the Office-Caltech dataset. The dictionary is created using SURF features 
extracted from \textit{Amazon} images (10 classes) of the \textit{Office} 
dataset using the k-means algorithm. The subspace is created using PCA (no ITML 
is 
applied). For this experiment we use all source images for training and test on 
all 
target images. We compute the average accuracy using NN classifier over all 12 
domain 
adaptation problems for each considered method and plot the mean accuracy in 
Fig. \ref{fig:bowdict}. 

\begin{figure}[t]
 \centering
 \includegraphics[width=\columnwidth]{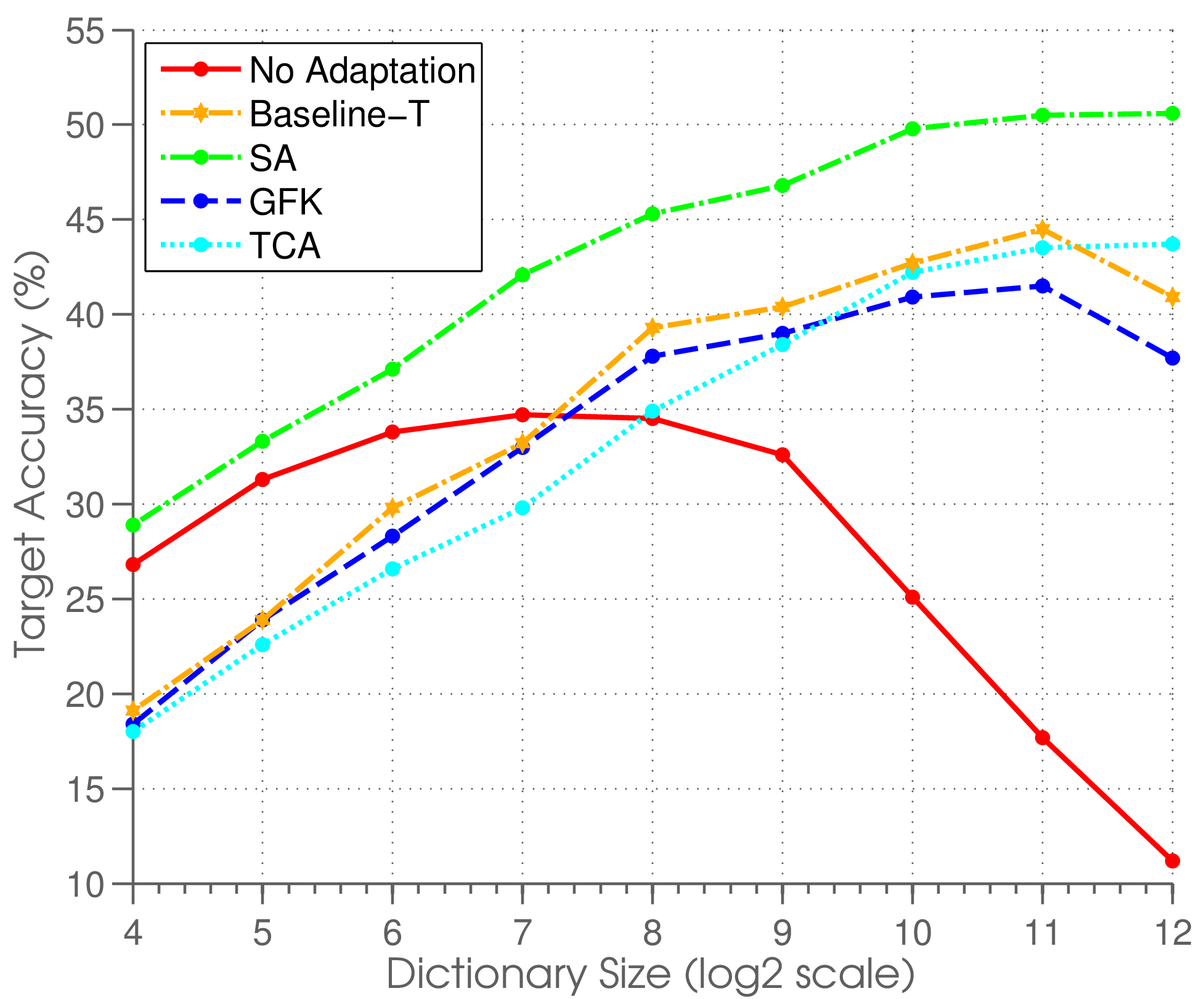}  
 \caption{Mean accuracy on target domain using NN classifier for different 
dictionary sizes on \textit{Office+Caltech-10} dataset (unsupervised domain 
adaptation).}
\label{fig:bowdict}
\end{figure}

From Fig. \ref{fig:bowdict}, we see clearly, the visual dictionary size 
affects the magnitude of domain shift. We can conclude that as the dictionary 
size increases the performance of the target domain increases up-to a point 
and then starts to drop for NA, Baseline-T and GFK. SA method performs the 
best for all dictionary sizes. The typical dictionary size used in the 
literature for this dataset is 800 visual words. But it seems that larger 
visual dictionary of 2048 words or beyond works the best for subspace based DA 
methods 
such as TCA, GFK and SA. This indicates that image representation can influence 
different domain adaptation methods differently.

\subsection{The effect of deep learning features for subspace based DA}
\label{sec:expDecaf}

The divergence between domains depends on the representation used 
\cite{Ben-David2007}. In this experiment we compare the classification 
performance on the target domain using the state of the art image 
classification 
feature called Decaf \cite{DonahueJVHZTD13}. Decaf uses deep learning 
\cite{Krizhevsky2012} based approach. 

We compare the performance of subspace based DA methods using the nearest 
neighbor classifier and SVM classifier. For this experiment we use the more 
challenging Office dataset 
which consists of 31 object classes and three domains (Amazon, Webcam and 
DSLR). 
We show results for NA, GFK, TCA and SA methods using DECAF features in 
Table \ref{tbl:decafnn} for NN classifier and in Table \ref{tbl:decafsvm} for 
SVM classifier.

\begin{table}[t]
\centering
\begin{tabular}{ | c | c | c | c |  }\hline
Method & \dt{D}{W} & \dt{W}{D} & \dt{A}{W} \\ \hline 
NA	& 86.4 $\pm$ 1.1	&88.6 $\pm$ 1.2	& 42.8 $\pm$ 0.9 \\ \hline 
GFK	& 69.2 $\pm$ 2.1	&61.8 $\pm$ 2.3	& 39.5 $\pm$ 2.1\\ \hline 
TCA	& 57.1 $\pm$ 1.7	&51.1 $\pm$ 1.8	& 24.8 $\pm$ 3.2\\ \hline 
SA	&\tr{86.8 $\pm$ 1.0}	&\tr{89.3 $\pm$ 0.8}	& \tr{44.7 $\pm$ 0.7}\\ 
\hline 
\end{tabular} 
\caption{Recognition accuracy with unsupervised DA using NN classifier on 
Office dataset using $Decaf_6$ features.}
\label{tbl:decafnn}
\end{table}

\begin{table}[t]
\centering
\begin{tabular}{ | c | c | c | c |  }\hline
Method & \dt{D}{W} & \dt{W}{D} & \dt{A}{W} \\ \hline 
NA	& 91.3 $\pm$ 1.2	&91.6 $\pm$ 1.6	& \tr{47.9 $\pm$ 2.9} \\ \hline 
GFK	& 87.2 $\pm$ 1.3	&88.1 $\pm$ 1.5	& 46.8 $\pm$ 1.8\\ \hline 
TCA	& 89.0 $\pm$ 1.4	&87.9 $\pm$ 1.9	& 44.6 $\pm$ 3.0\\ \hline 
SA	&\tr{91.8 $\pm$ 0.9}	&\tr{92.4 $\pm$ 1.7}	& 47.2 $\pm$ 1.5\\ 
\hline 
\end{tabular} 
\caption{Recognition accuracy with unsupervised DA using SVM classifier on 
Office dataset using $Decaf_6$ features.}
\label{tbl:decafsvm}
\end{table}

Results in Table \ref{tbl:decafnn} suggest that there is a slight advantage of 
SA 
method over all domains. On the other-hand GFK and TCA seem to perform very 
poorly. When used with SVM classifier both GFK and TCA perform much better than 
with NN classifier (--see Table~\ref{tbl:decafsvm}). Clearly, the DECAF 
features 
boost the performance in typical DA datasets such as Office. However, we notice 
that the performance for $Amazon \rightarrow Webcam$ DA problem is lower. All 
subspace-based DA methods have low-performance for this DA problem when SVM 
classifier is used. However, SA seems to improve over DECAF features for this 
DA task when NN classifier is utilized. We believe there is still room for 
improvements for DA methods to boost recognition rate when DECAF is used as a 
representation.

DECAF being a deep learning feature, it is trained from millions of 
images in a discriminative manner. From learning theory point of view, this 
allows to attenuate the probabilistic upper bound on classification error in 
the 
test set. At the same time, probably as millions of images are used during the 
training, it is possible that training algorithms (Convolutional neural network 
+ upper layers) has already visited variety of images from different domains 
allowing the learning algorithm to be invariant to differences in domains and  
build a representation that is quite domain invariant in general. 

\subsection{Effect of z-normalization}
\label{sec:zscore}

As we mentioned in section \ref{sec:notations}, z-normalization influences DA 
methods such as GFK, TCA and SA. To evaluate this, we perform an experiment 
using \textit{Office+Caltech-10} dataset. In this experiment we use all source 
data for training and all target data for testing. We use the same BOW features 
as in \cite{Gong2012}. We report mean accuracy over 12 domain adaptation 
problems in Fig. \ref{fig:zscore}. We report accuracy with and without 
z-normalization for all considered DA methods.

\begin{figure}[t]
 \centering
 \includegraphics[width=\columnwidth]{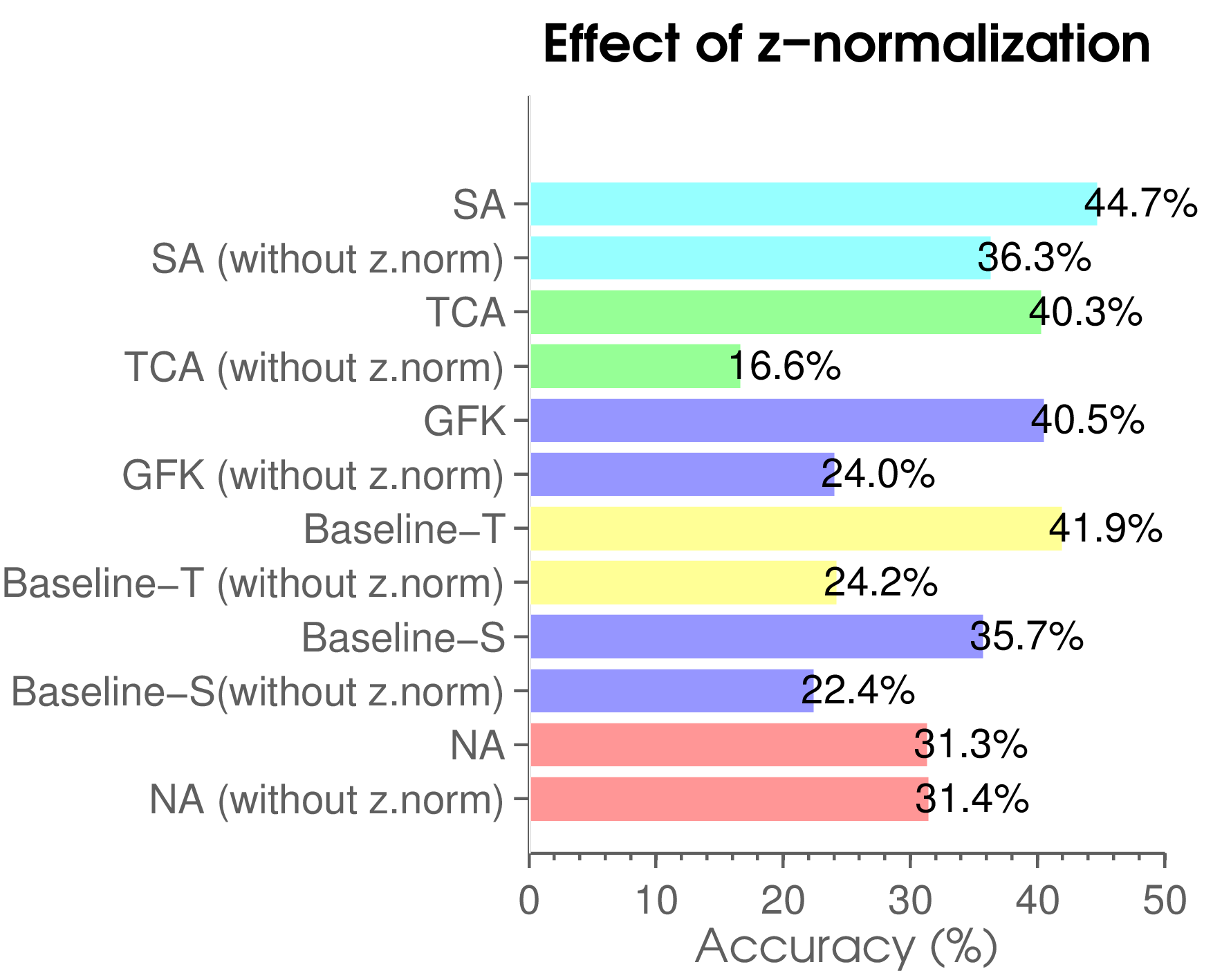}  
 \caption{The effect of z-normalization on subspace-based DA methods. Mean 
classification accuracy on 12 DA problems on Office+Caltech dataset is shown 
with and without z-norm. }
\label{fig:zscore}
\end{figure}

It is clear that all DA method including the methods such as projecting to the 
source (\textit{Baseline-S}) and target domain (\textit{Baseline-T}) hugely 
benefited by the z-normalization. The biggest beneficiary is the TCA method 
while GFK as well as most other methods even fail to improve over no adaptation 
without z-normalization. SA method is benefited from z-normalization but even 
without z-normalization it improves over NA results.

\subsection{Comparison with other non-subspace-based DA methods}
\label{sec:nonsub}

In this section we compare our method with several non-subspace based domain 
adaptation 
methods that exist in the literature. For this evaluation we use the 
Office+Caltech-10 dataset(--see Table~\ref{tbl:asm-state-sup}). 
We compare with traditional self-labeling method similar to 
DA-SVM~\cite{Bruzzone2010}, 
KLIEP-based~\cite{Sugiyama} instance weighting DA method and a dictionary 
learning based 
DA method~\cite{Ni2013}. From results in Table~\ref{tbl:asm-state-sup}, it is 
interesting
 to see that simple Self-labeling methods seems to work well on this 
dataset. Especially, for DA tasks such as ${C} \rightarrow {A}$ and ${C} 
\rightarrow {W}$ self-labeling seems to outperforms SA method. The 
instance weighting method based on KLIEP~\cite{Sugiyama} algorithm did not 
perform well on this dataset.
On the other-hand, methods based on dictionary learning seems to work 
better than SA for some problems such as ${C}\rightarrow{D}$ and 
${D}\rightarrow{W}$.
We believe that a combination of all these methods might lead to superior 
performance. 
In theory, SA method can be combined with instance weighting, self-labeling and 
dictionary learning based methods. 
Such a combination of strategies might assist to overcome practical domain shift 
issues 
in application areas such as video surveillance and vision-based-robotics. 
In future, we plan to investigate the applicability of such combination 
of strategies in real-world DA scenarios.

\begin{table*}[t!]
\small
\centering
\begin{tabular}{ | c | c | c |  c | c | c | c| c| c| c| c| c| c| c|}\hline

Method & \dt{C}{A} & \dt{D}{A} & \dt{W}{A} & \dt{A}{C} & \dt{D}{C} &  
\dt{W}{C} & \dt{A}{D}&\dt{C}{D}&\dt{W}{D}&\dt{A}{W}&\dt{C}{W}&\dt{D}{W} & 
AVG.\\ \hline 

SA (ITML-PCA,PCA) & 47.1& 40.0&42.4&41.1&38.1&39.8&43.7&40.4 &83.0&43.5&42.8& 
84.5&48.5\\ \hline

Self-labelling (SVM) &52.1 & 37.7 & 36.2 & 38.3 & 28.2 & 28.3 & 40.2 & 41.3 & 
70.9 & 39.0 & 46.7 & 81.2 & 45.0 \\ \hline

Instance Wighting (LKIEP+SVM) &43.2 & 34.9 & 30.4 & 36.0 & 31.0 & 23.1 & 24.3 & 
28.0 & 36.7 & 27.3 & 28.2 & 66.4 & 34.1 \\ \hline

Dictionary Learning \cite{Ni2013}& 45.4 & 39.1 & 38.3 & 40.4 & N/A & 36.3 & N/A 
& 42.3 & N/A & 37.9 & N/A & 86.2 & 45.7 \\ \hline


\end{tabular} 
\caption{Comparison with non subspace-based methods with SA(ITML-PCA, PCA) 
method using Office+Caltech-10 dataset using bag-of-words features.}
\label{tbl:asm-state-sup}
\end{table*}

\section{Discussion and Conclusion}
\label{sec:conclusion}

We have presented a new visual domain adaptation method using subspace 
alignment. In this method, we create subspaces for both source and target 
domains and learn a linear mapping that aligns the source subspace with the 
target subspace. This allows us to compare the source domain data directly with 
the target domain data and to build classifiers on source data and apply them 
on 
the target domain. We demonstrate excellent performance on several image 
classification datasets such as Office dataset, Caltech, ImageNet, LabelMe and 
Pascal-VOC2007. We show that our method outperforms state-of-the-art domain 
adaptation methods using both SVM and nearest neighbour classifiers. Due to its 
simplicity and theoretically founded stability, our method has the potential to 
be applied on large datasets. For example, we use SA to label PASCAL-VOC2007 
images using the classifiers build on ImageNet dataset.

In this extended version of our original work, we addressed several limitations 
of our previous paper. First, we propose SA-MLE method which does not require 
any cross-validation to find the optimal subspace dimensionality. SA-MLE uses 
maximum likelihood estimation to find a good source and target subspace 
dimensionality. Secondly, we propose a new way to use label information of the 
source domain to obtain more discriminative source subspace. Thirdly, 
we provided some analysis on domain adaptation from representation point of 
view. We showed that the dictionary size of the \textit{bag-of-words} 
representation influences the classification performance of subspace-based DA 
methods. Generally, larger visual vocabulary seems to improve domain adapted 
results even when results without adaptation is significantly low. At the same 
time we see that new representations such as DECAF can be used to improve 
recognition rates over different domains. We conclude that visual domain 
adaptation can be overcome by either choosing generic image representations or 
by developing better DA algorithms such as subspace alignment.

Our method assumes that both the source and the target data lie in the 
same space and joint distributions to be correlated. SA method exploits 
this correlation, if there is any to do the adaptation. If both 
joint-distributions 
are independent, then it would be impossible for our method to learn any 
meaningful adaptation.

The subspace alignment method assumes both source and target domain have 
similar class prior probability distributions. When there is significant change 
in class prior probability (as recently discussed in target shifts 
\cite{Zhang2013}), the subspace alignment method may fail. Also in future work 
we plan to incorporate 
information theoretic metric learning like domain constraints directly 
in the subspace alignment objective function.

\textbf{Acknowledgements:}\\
The authors acknowledge the support of the FP7 ERC Starting Grant 240530 
COGNIMUND, ANR LAMPADA 09-EMER-007-02 project and PASCAL 2 network of 
Excellence.\\

%

\end{document}